  \providecommand\BibTeX{{%
    \normalfont B\kern-0.5em{\scshape i\kern-0.25em b}\kern-0.8em\TeX}}}
\newcommand{\R}{\mathbb{R}}
\begin{document}

\title{Rare-Event Simulation for Neural Network and Random Forest Predictors}

\author{Yuanlu Bai}
\email{yb2436@columbia.edu}
\affiliation{%
  \institution{Columbia University}
  \streetaddress{500 W. 120th Street}
  \city{New York}
  \country{USA}
}

\author{Zhiyuan Huang}
\email{zhuang2@andrew.cmu.edu}
\affiliation{%
  \institution{Carnegie Mellon University}
  \country{USA}}

\author{Henry Lam}
\email{henry.lam@columbia.edu}
\affiliation{%
  \institution{Columbia University}
  \streetaddress{500 W. 120th Street}
  \city{New York}
  \country{USA}
}

\author{Ding Zhao}
\email{dingzhao@cmu.edu}
\affiliation{%
 \institution{Carnegie Mellon University}
  \country{USA}}

\renewcommand{\shortauthors}{Bai, Huang, Lam \& Zhao}

\begin{abstract}
 We study rare-event simulation for a class of problems where the target hitting sets of interest are defined via modern machine learning tools such as neural networks and random forests. This problem is motivated from fast emerging studies on the safety evaluation of intelligent systems, robustness quantification of learning models, and other potential applications to large-scale simulation in which machine learning tools can be used to approximate complex rare-event set boundaries. We investigate an importance sampling scheme that integrates the dominating point machinery in large deviations and sequential mixed integer programming to locate the underlying dominating points. Our approach works for a range of neural network architectures including fully connected layers, rectified linear units, normalization, pooling and convolutional layers, and random forests built from standard decision trees. We provide efficiency guarantees and numerical demonstration of our approach using a classification model in the UCI Machine Learning Repository.
\end{abstract}

\begin{CCSXML}
<ccs2012>
   <concept>
       <concept_id>10010147.10010341.10010346</concept_id>
       <concept_desc>Computing methodologies~Simulation theory</concept_desc>
       <concept_significance>500</concept_significance>
       </concept>
   <concept>
       <concept_id>10010147.10010341.10010349</concept_id>
       <concept_desc>Computing methodologies~Simulation types and techniques</concept_desc>
       <concept_significance>500</concept_significance>
       </concept>
   <concept>
       <concept_id>10002950.10003648</concept_id>
       <concept_desc>Mathematics of computing~Probability and statistics</concept_desc>
       <concept_significance>500</concept_significance>
       </concept>
   <concept>
       <concept_id>10010147.10010257</concept_id>
       <concept_desc>Computing methodologies~Machine learning</concept_desc>
       <concept_significance>500</concept_significance>
       </concept>
   <concept>
       <concept_id>10002950.10003714.10003716</concept_id>
       <concept_desc>Mathematics of computing~Mathematical optimization</concept_desc>
       <concept_significance>500</concept_significance>
       </concept>
 </ccs2012>
\end{CCSXML}

\keywords{variance reduction, importance sampling, safety evaluation, neural network, random forest, large deviations}

\maketitle

\section{Introduction}

Due to the extensive development of artificial intelligence (AI), machine learning techniques have been embedded in many safety-sensitive physical systems, including autonomous vehicles \cite{koopman2017autonomous} and unmanned aircraft \cite{kochenderfer2012next}. In autonomous vehicles, for instance, machine learning predictors can be applied to many tasks including perception \cite{chen2015deepdriving,van2018autonomous}, path planning \cite{glasius1995neural,yang2004neural}, motion control \cite{spielberg2019neural}, or end-to-end driving systems \cite{muller2006off,chen2017end,kalra2016driving}. In these tasks, misprediction can cause catastrophic impacts on public safety, as exemplified by the series of fatal accidents encountered by autonomous driving systems due to the failures in detecting nearby vehicles or pedestrians (e.g. \cite{board2018preliminary,board2019report}). To reduce the risk of such catastrophe, machine learning models in these systems need to be carefully evaluated against safety, especially before their mass deployment in public. 

Recent research considers using probabilistic measures to quantify the risks of machine learning predictors or entire intelligent physical systems. These measures can be defined in a variety of ways. In \emph{robustness evaluation}, a prediction model, with neural network as a dominant example, is considered more robust if it is more likely to make a consistent prediction under small perturbations on the input \cite{goodfellow2014explaining}. When the perturbation is modeled via a random distribution, the robustness of neural networks is measured by the probability that the prediction value persists \cite{weng2018proven,webb2018statistical,wang2019statistically}. In more complex \emph{intelligent system evaluation}, risks can be quantified by the occurrence probabilities of safety-critical events. These events can be defined as the violation in terms of certain safety metrics (e.g., \cite{fraade2018measuring} listed seven potential safety metrics for autonomous vehicles including crashes per driving hour and disengagements per scenario), and recent studies use the probabilities of crash or injury in driving tasks as safety metrics \cite{zhao2016accelerated,huang2017accelerated,o2018scalable}. For AI-equipped autonomous vehicles, the evaluation target would implicitly involve a probabilistic measurement on the embedded machine learning model. Moreover, in \cite{wu2004learning}, neural networks are further used to approximate sophisticated safety-critical sets defined from complex system dynamics, and the target probabilities comprise hitting sets defined via these neural network outputs.

Our study is motivated from the estimation of probabilistic risk measures described above. Due to the complexity of machine learning predictors, these probabilities are typically unamenable to analytical formulas, even when the underlying stochastic distribution is fully modeled. This thus calls for the use of Monte Carlo simulation. However, the target probabilities, which signify the risks of dangerous yet unlikely events, are tiny. The problem thus falls into the domain of rare-event simulation, in which it is widely known that crude Monte Carlo can be extremely inefficient and variance reduction is necessarily employed. Traditionally, rare-event simulation techniques (e.g. \cite{bucklew2013introduction,juneja2006rare}) have been applied in broad application areas including queueing systems \cite{Sadowsky91,KN99,BGL10,blanchet2014rare,BMRT09,SP02,Ridder09,dupuis2009importance}, communication networks \cite{PARWAL89,KESWALCHA93,chen2019efficient},
finance \cite{Glasserman04,glasserman2008fast,glasserman2005importance}, insurance \cite{ASM00Ruin,pASM85a,collamore2002importance},
reliability \cite{Heidelberger95,RT09,Tuffin04QEST,nicola1993fast,nicola2001techniques},
biological processes \cite{Grassberger02,Sandmann09}, dynamical systems \cite{dupuis2012importance,vanden2012rare}, 
and combinatorics \cite{BKS07,Blanchet09AAP}. 
The evaluation of machine learning models and intelligent physical systems that we focus on here is a new application that is propelled rapidly by the growth of AI. Our goal is to provide a first step into building rare-event simulation algorithms in these applications, which integrate tools from both the disciplines of machine learning and rare-event simulation, and which are statistically guaranteed in terms of the classical efficiency notions in the rare-event literature.



More specifically, we study importance sampling (IS) \cite{siegmund1976importance} to design efficient estimators. In rare-event estimation, the rarity nature of hitting set dictates that crude Monte Carlo samples have a low frequency of observing the hitting occurrence, and this inefficiency exhibits statistically as a large relative error (i.e., ratio of standard deviation to mean) in the estimation. To mitigate this issue, IS uses an alternate distribution to generate samples that can attain a higher frequency in hitting the target event, and reweights the outputs to maintain unbiasedness via the likelihood ratios. To achieve a small relative error, the new generating distribution (i.e., the IS distribution) needs to be carefully selected, often by analyzing the weights in interaction with the hitting set geometry and the underlying system dynamics \cite{glynn1989importance,sadowsky1990large}. It is known that such analyses are important as ill-designed schemes can lead to significantly biased estimates \cite{glasserman1997counterexamples}. In this paper, we follow the above analysis path in the literature and use the common theoretical notion of efficiency called asymptotic optimality or logarithmic efficiency \cite{asmussen2007stochastic,juneja2006rare,Heidelberger95} that we will detail in the sequel. 



In terms of our scope of study, we focus on piecewise linear machine learning predictors, which include random forests and neural networks with common activation functions such as rectified linear units (ReLU). We also assume the underlying distribution is Gaussian or mixtures of such. Under this setting, we design provably efficient IS schemes to estimate rare-event probabilities that the prediction outputs hit above certain high thresholds. Our main methodology integrates the classical notion of dominating points \cite{sadowsky1990large,dieker2005asymptotically} for rare-event sets with sequential mixed integer programming (MIP) to attain an efficient estimator. Intuitively, a dominating point is the highest-density point in the rare-event set, so that using an IS distribution that shifts the mean to this point (via exponential tilting) gives rise to a distribution that hits the rare-event set more frequently and the generated likelihood ratio contributes properly to the probability of interest, which are desirable for controlling the relative error. However, this is only a local characterization. To explain, the simulation randomness stipulates that some generated samples may have huge likelihood ratios. Controlling these ratios in turn requires a geometric property that, in the Gaussian case, implies the dominating point to be on the boundary of the rare-event set, and that the latter lies completely inside one of the half-spaces cut by the tangential hyperplane passing through the dominating point (e.g., these occur when the rare-event set is convex). When this geometric property does not hold, then one needs to divide the rare-event set into union of smaller sets each bearing its own dominating point, and an efficient IS scheme is built via a mixture of exponential tiltings targeted at all these individual dominating points \cite{sadowsky1990large}. The sequential MIP in our procedure serves to locate all these dominating points. It casts each search as a density maximization problem constrained by hitting sets induced from the considered machine learning model. The involved feasible regions shrink sequentially as we add more ``cutting planes" to the constraints in order to remove the half-spaces that are already considered by earlier dominating points. Our MIPs are derived from the reformulation techniques that appeared recently in the machine learning literature, which leverage the geometric structures of ReLU neural networks \cite{tjeng2017verifying} and random forests \cite{mivsic2017optimization}. We provide a step-by-step guide in formulating random forests and different neural network architectures as suitable MIPs to be inserted into our sequential algorithm. We also provide theoretical results on asymptotic optimality that targets at general piecewise polyhedrals where applies to our considered rare-event sets. Towards this, we also derive large deviations results for the associated probabilities of interest.

The paper is organized as follows. Section \ref{sec:lit} first provides a brief literature review. Section \ref{sec:problem_setting} describes our problem setting and notations. Section \ref{sec:cuttingplain} presents our algorithm and theoretical guarantees. Section \ref{sec:fomulation} provides the MIP formulations for random forests and different neural network architectures. Section \ref{sec:experiments} shows numerical results. Section \ref{sec:guarantee} contains the proofs of theorems.




\section{Related Work}\label{sec:lit}
A significant line of work studies the use of large deviations to invent efficient IS procedures, which mathematically identifies the most likely path to trigger a rare event through minimizing the so-called rate function (see, e.g., the surveys \cite{bucklew2013introduction,asmussen2007stochastic,rubinstein2016simulation,glasserman2013monte,juneja2006rare,blanchet2012state}). Among these studies, our approach extends the IS schemes using dominating points \cite{sadowsky1990large,dieker2005asymptotically}. Similar idea of using half-spaces to split rare-event set is also considered in \cite{ahn2018efficient,owen2019importance} where the rare-event set is constrained to be a union of half-spaces and the half-spaces are explicitly given in the setting. The use of sequential MIP algorithm on an implicitly half-space separable rare-event set distinguishes our work from these studies. To prove the efficiency of our algorithm, we need to derive the asymptotic result for the rare-event probability of interest. \cite{Hashorva2003gaussian} provides a variety of useful techniques to represent the asymptotic approximation of probability using dominating points, but some technical modifications need to be made to fit in our settings. Similar to our derivations, \cite{honnappa2018dominating} represents the asymptotic of probability on convex sets using dominating points, yet focuses on a different scaling setting from ours.

Other IS schemes include the cross-entropy method  \cite{RUBKRO04,DNR00,DEBKROMANRUB05,Rubinstein97,Rubinstein99} that uses sequential stochastic optimization to search for an optimal IS distribution in a parametric family. Adaptive IS \cite{KBCP99,DG01,ABJ06,BJK04} updates the IS distribution iteratively between simulated replications to approach the optimal (zero-variance) IS distribution and generates non i.i.d samples for estimating the target expectation associated with finite-state discrete Markov chains. Another line of studies use techniques such as Markov-chain Monte Carlo (MCMC) to sample from the rare-event set of interest, or approximately from the conditional distribution given the occurrence of the rare event \cite{botev2013markov,grace2014automated,chan2012improved,botev2020sampling}. IS schemes have also been designed for heavy-tailed systems \cite{blanchet2008efficient,blanchet2008state,dupuis2007importance,chen2019efficient,blanchet2012efficient,murthy2015state,hult2012importance}, in contrast to the light-tailed settings considered in this paper. Besides IS, other competing methods for rare-event simulation include conditional Monte Carlo \cite{AsmBin97,AsmKro06} and splitting \cite{LLLT09,GHSZ99,GLAHEISHAZAJ98,Dean2009562,glasserman1998large}.

In machine learning literature, some studies discuss using probability measure to evaluate the robustness of prediction models. Since the measure can be extremely small, rare-event simulation techniques are considered in these studies. \cite{webb2018statistical} discusses an adaptive multilevel splitting approach to estimate the statistical robustness of machine learning models. \cite{uesato2018rigorous} proposes to learn a failure probability predictor to approximate the minimum variance IS distribution in estimating agent failure probabilities. \cite{weng2018proven} proposes an approach to compute the lower and upper bounds for a probabilistic robustness measure. The topic of these works is one of our key motivations, and our work can be viewed as a step towards the provision of rigorous guarantees for methodologies driven by these applications.

Lastly, another related line of research studies optimization problems with machine learning models in the objective. \cite{mivsic2017optimization} discusses the optimization of tree ensemble models and provides treatment for large scale problems. \cite{tjeng2017verifying} formulates the robustness verification of neural networks as MIP problems. These studies leverage the piecewise linear property of these machine learning models to turn optimization on the prediction output into tractable MIPs. Our MIP formulations for finding dominating points follow from these optimization studies.


\section{PROBLEM SETTING}\label{sec:problem_setting}
\subsection{Rare-Event Probability Estimation}\label{subsec:estimation}

We state our problem setting. Consider a prediction model $g(\cdot)$, with input $X \in \mathbb R^d$ and output $g(X) \in \mathbb R$. Suppose that the input follows a Gaussian distribution, i.e, $X \sim N(\mu,\Sigma)$, where $\Sigma$ is a $d\times d$ positive definite matrix. We want to estimate the probability $p=P(g(X) \geq  \gamma)$, where $\gamma \in \mathbb R$ is a threshold that triggers a rare event. We note that the Gaussian assumption can be relaxed without much difficulty in our framework to, for instance, mixtures of Gaussians, which we will discuss later and can expand our scope of applicability. 

This problem setting is related to risk assessments involving machine learning models, as exemplified below.

\begin{example}[Statistical Robustness Metric \cite{webb2018statistical,weng2018proven}] \label{example:robust_metric}
Consider a classification model that predicts using ``score functions'' $g_i(\cdot)$ with $i=1,..,K$ where $K$ denotes the number of categories. The predicted output is the category that has the maximum score, i.e. the prediction at $X$ is given by $\arg \max_i g_i(X)$. Suppose an example input $x_0$ belongs to category $c$. A classification model is robust if it gives correct prediction for all $x$ such that $d(x,x_0)\leq \epsilon$ where $d$ denotes a certain distance and $\epsilon>0$ is a small real number. A statistical robustness metric considers $p=P(\max_i g_i(X)-g_c(X)\geq 0)$, where $X$ follows a distribution concentrated around $x_0$. Here $p$ represents the probability that the output is inconsistent with the baseline prediction at $x_0$. 
\end{example}

\begin{example}[Risk Evaluation of Intelligent Physical Systems \cite{dreossi2019compositional}]
\label{example:b}
Consider an intelligent physical system that embeds a machine learning predictor $g$, so that the decision of the system given an input $X$ can be expressed as $h(g(X))$.  The probability $P(h(g(X))\in S)$, where $S$ represents a risky region, can be used to measure the risk of the system decision. In most cases, $h$ is random by itself and can have a different complexity structure than the function class $g$. In this paper, we consider a rare-event probability in the form of $P(g(X)\geq\gamma)$ as a first step of study in this direction.  
\end{example}

\begin{example}[Probability Evaluation for Learned Rare-Event Set \cite{wu2004learning}] \label{example:a}
When the rare-event set is very complicated (e.g., in autonomous driving contexts), one approach to retain tractability is to approximate or learn the rare-event set via classification tools. Given historical or simulated data $\{X,Y \}$, where $Y\in \{0,1 \}$ denotes whether a rare-event (e.g. a crash) occurs to the system of interest under input $X$, we train a neural network $g(\cdot)$ to classify the rare-event region given $X$. The learned rare-event set is represented by $\{x:g(x)\geq \gamma\}$, where $\gamma$ is the threshold for classifying rare-event (e.g. $\gamma=0.5$). Since $\{x:g(x)\geq \gamma\}$ is an approximation of the true rare-event set, $p=P(g(X)\geq \gamma)$ provides an approximation on the probability of the rare event. 
\end{example}

It is known that neural networks can be vulnerable to adversarial attacks, in that a tiny perturbation in the input can exert a large effect on the prediction output \cite{goodfellow2014explaining}, and such a perturbed input is considered as an adversarial example. Studies have discussed how to find these adversarial examples \cite{kurakin2016adversarial1} and to conduct adversarial learning \cite{kurakin2016adversarial}. Among them, Example \ref{example:robust_metric} is an example of a probabilistic measure on how likely adversarial examples appear around a certain input. Examples \ref{example:b} and \ref{example:a}, on the other hand, represent endeavors to tackle safety-critical problems driven by applications involving AI systems, which can embed machine learning models or are approximated by them.



\subsection{Importance Sampling}\label{subsec:IS}

When $p$ is small, estimation using crude Monte Carlo is challenging since, intuitively, the samples have a low frequency of hitting the target set. This is statistically manifested as a large relative error. To be more specific, suppose that we use the crude Monte Carlo estimator $\hat{p}_N=\frac{1}{N}\sum_{i=1}^N I(g(X_i)\geq\gamma)$ to estimate $p$. Since the probability $p$ is tiny, the error of the estimator should be measured relative to the size of $p$. In other words, we would like the probability of having a large relative error to be small, i.e., $P(|\hat{p}_N-p|>\varepsilon p)\leq \delta$ where $\delta$ is the confidence level and $0<\varepsilon<1$. By Markov's inequality, a sufficient condition for this is 
$$N\geq \frac{Var(I(g(X)\geq\gamma))}{\delta\varepsilon^2 E[I(g(X)\geq \gamma)]^2}=\frac{RE^2}{\delta\epsilon^2}.$$ 
where $RE=\sqrt{Var(I(g(X)\geq\gamma))}/E[I(g(X)\geq\gamma)]$ is the relative error. For the crude Monte Carlo estimator, the RE is given by $\sqrt{(1-p)/p}$. 
That is, the simulation size $N$ has to be roughly proportional to $1/p$ in order to achieve a given relative error. Under the settings that $X$ has a Gaussian distribution and $g$ is piecewise linear (see Corollary \ref{thm:asymptotic}), $p$ is exponentially small in the threshold level $\gamma$, and hence the required simulation size would grow exponentially in $\gamma$.
\par 
A common approach to speed up simulation in such contexts is to use IS (see, e.g. the surveys \cite{bucklew2013introduction,asmussen2007stochastic,rubinstein2016simulation,glasserman2013monte,juneja2006rare,blanchet2012state}, among others). Suppose $X$ has a density $f$. The basic idea of IS is to change the sampling distribution to say $\tilde f$, and output
\begin{equation} \label{eq:is_estimator}
Z=I(g(\tilde{X}) \geq  \gamma) \frac{f(\tilde{X})}{\tilde f(\tilde{X})}, 
\end{equation} 
where $\tilde{X}$ is sampled from $\tilde f$. This output is unbiased if $f$ is absolutely continuous with respect to $\tilde f$ over the rare-event set $\{x:g(x)\geq\gamma\}$. By choosing $\tilde f$ appropriately, one can substantially reduce the simulation variance.

To measure the efficiency of an IS scheme, we introduce a rarity parameter, say $\gamma$, that parametrizes the rare-event probability $p_\gamma$ such that $p_\gamma\to0$ as $\gamma\rightarrow \infty$. As discussed before, since the probability of interest is small, one should focus on the relative error of the Monte Carlo estimator with respect to the magnitude of this probability. To this end, we call an IS estimator $Z_\gamma$ for $p_\gamma$ asymptotically optimal \cite{asmussen2007stochastic,juneja2006rare} if \begin{equation}
 \lim_{\gamma\rightarrow \infty} \frac {\log \tilde{E}[Z_\gamma^2]}{\log \tilde{E}[Z_\gamma]} = 2,\label{asymptotic efficiency}
\end{equation}
where $\tilde{E}$ denotes the expectation with regard to $\tilde f$. The notion \eqref{asymptotic efficiency} is equivalent to saying that $\tilde{E}[Z_\gamma^2]/\tilde{E}[Z_\gamma]^2$ is at most polynomially growing in $\gamma$. This ensures that the second moment, or the variance, does not explode exponentially relative to the probability of interest as $\gamma$ increases, thus preventing an exponentially large number of simulation replications to achieve a given relative accuracy. We will use asymptotic optimality as our efficiency criterion in this paper.
\par 
Another commonly used efficiency criterion is the bounded relative error, which is defined as 
$$
\limsup_{\gamma\rightarrow\infty}\frac{\tilde{E}[Z_\gamma^2]}{\tilde{E}[Z_\gamma]^2}<\infty.
$$
This is a stronger condition than asymptotic optimality. More efficiency criteria can be found in \cite{juneja2006criteria, ecuyer2010criteria}.



\section{Efficient Importance Sampling via Sequential Mixed Integer Programming}
\label{sec:cuttingplain}

In the case of Gaussian input distributions, finding a good $\tilde f$ is particularly handy and one approach to devise good IS distributions uses the notion of so-called dominating point. As explained in the introduction, a dominating point can be understood as the highest-density point in the rare-event set that satisfies some conditions. More precisely, the collection of dominating points for a rare-event set with Gaussian distributed input is defined in Definition \ref{def:dominant}.
\begin{definition}
Suppose that a set $A\subset\R^d$ satisfies that 
$S\subset \bigcup_{a\in A}\{x:(a-\mu)^T\Sigma^{-1}(x-a)\geq 0\}$ and that $a=\arg\min_x\{(x-\mu)^T\Sigma^{-1}(x-\mu):x\in S \text{ and } (a-\mu)^T\Sigma^{-1}(x-a)\geq 0\}$ for any $a\in A$. Moreover, suppose that the above conditions do not hold anymore if we remove any element from $A$. Then the points in $A$ are called the dominating points of $S$ with input distribution $N(\mu,\Sigma)$.
\label{def:dominant}
\end{definition}

\par 
Note that minimizing $(x-\mu)^T \Sigma^{-1}(x-\mu)$ is equivalent to maximizing $\phi(x;\mu,\Sigma)$, the Gaussian density with mean $\mu$ and covariance $\Sigma$. The condition $2(a-\mu)^T\Sigma^{-1}(x-a) \geq 0$ is the first-order condition of optimality for the optimization $\min_x (x-\mu)^T \Sigma^{-1}(x-\mu) $ over a convex set for $x$. Thus, intuitively, each dominating point in the collection $A$ can be viewed as the highest-density point in a ``local" region formed by $S\cap \{x: (a-\mu)^T\Sigma^{-1}(x-a)\geq 0\}$. In particular, if $\{x:g(x)\geq\gamma\}$ is a convex set, then there is only one dominating point $a$. In this case, a well-known IS scheme is to use a Gaussian distribution $N(a,\Sigma)$ as the IS distribution $\tilde{f}$. 
\par 
We explain intuitively why we need more than one dominating point (the highest-density point over $S$) and the pitfall if we omit the other ones in constructing efficient IS. Suppose that the rare-event set consists of two disconnected convex components which are nearly equi-distant with respect to the origin, and we choose the IS distribution to be centered at the dominating point of one component. Then, if a sample from the IS distribution hits the other component, a scenario that could be unlikely but possible, the resulting likelihood ratio, which now contributes to the output as the rare-event set is hit, could possibly be tremendous. This ultimately leads to an explosion of the relative error in the IS estimator. \cite{glasserman1997counterexamples} presents more counterexamples which show that it is essential to find all the dominating points in constructing an efficient IS based on mixtures.
\par 
In view of the aforementioned discussions, we consider the following IS scheme. If we can split $\{x:g(x) \geq \gamma\}$ into $\mathcal{R}_1,...,\mathcal{R}_r$, and for each $\mathcal{R}_i,i=1,...,r$ there exists a dominating point $a_i$ such that $a_i=\arg \min_{x} \{(x-\mu)^T \Sigma^{-1}(x-\mu) : x \in \mathcal R_i \} $ and $\mathcal R_i  \subseteq \{x:(a_i-\mu)^T\Sigma^{-1}(x-a_i) \geq 0\}$, then we use a Gaussian mixture distribution with $r$ components as the IS distribution $\tilde{f}$, where the $i$th component has mean $a_i$. This proposal guarantees the asymptotic optimality of the IS (see Theorem \ref{thm:efficiency}). 
\par 


In our task, because the machine learning predictor $g(x)$ is nonlinear and $x$ is high-dimensional in general, splitting $\{x:g(x) \geq \gamma\}$ into $\mathcal{R}_1,...,\mathcal{R}_r$ that have dominating points is challenging even with known parameters. This challenge motivates us to use Algorithm \ref{algo:main} to obtain the dominating points $a_1,...,a_r$ that constructs an efficient IS distribution. The procedure uses a sequential ``cutting plane" approach to exhaustively look for all dominating points, by reducing the search space at each iteration via taking away the regions covered by found dominating points. The set $A$ in the procedure serves to store the dominating points we have located throughout the procedure. At the end of the procedure, we obtain a set $A$ that contains all the dominating points $a_1,...,a_r$. 

\begin{algorithm}[h]
\KwIn{Prediction model $g(x)$, threshold $\gamma$, input distribution $N(\mu,\Sigma)$.}
\KwOut{dominating point set $A$.}
\nl Start with $A = \emptyset$;\\

\nl {\bf While } $\{x: g(x) \geq \gamma , (a_i-\mu)^T\Sigma^{-1}(x-a_i) <0, \mbox{ $\forall a_i \in A$} \} \neq  \emptyset$ {\bf do } \label{algo:while_line}

\nl \ \ \ \ \ \ Find a dominating point $a$ by solving the optimization problem 
\begin{align} 
\label{eq:opt_ite}
		a=\arg \min_{x} &\ \   (x-\mu)^T \Sigma^{-1}(x-\mu) &\\ \notag
s.t. &\ \  g(x) \geq \gamma &\\  \notag
&\ \  (a_i-\mu)^T\Sigma^{-1}(x-a_i) <0,\ \mbox{$\forall a_i \in A$}&  \notag
\end{align} 
\ \ \ \ and update $A \leftarrow A \cup \{a\}$;

\nl \bf End\
    \caption{{\bf Procedure to find all dominating points for the set $\{x: g(x)\geq \gamma \}$.} \label{algo:main}}

\end{algorithm}

Algorithm \ref{algo:main} gives $A=\{a_1,\dots,a_r\}$. With this, we split $\{x:g(x)\geq \gamma\}$ into $\mathcal{R}_1,\dots,\mathcal{R}_r$ where $\mathcal{R}_i=\{x:g(x)\geq\gamma,(a_i-\mu)^T\Sigma^{-1}(x-a_i)\geq 0,(a_j-\mu)^T\Sigma^{-1}(x-a_j)\leq 0,\forall j<i\}$. Clearly $a_i=\arg\min\{(x-\mu)^T\Sigma^{-1}(x-\mu):x\in\mathcal{R}_i\}$ and $(a_1-\mu)^T\Sigma^{-1}(a_1-\mu)\leq \dots\leq (a_r-\mu)^T\Sigma^{-1}(a_r-\mu)$. Moreover, we note that $(a_1-\mu)^T\Sigma^{-1}(a_1-\mu)=\min_{i=1,\dots,r}\{(a_i-\mu)^T\Sigma^{-1}(a_i-\mu)\}$.
\par 
Given the dominating point set $A$ we use 
$$
\frac{1}{r}N(a_1,\Sigma)+\dots+\frac{1}{r}N(a_r,\Sigma)
$$
as the IS distribution. That is, the IS estimator is 
\begin{equation}
    \label{eq:proposed_IS}
    Z=I(g(\tilde{X})\geq\gamma)L(\tilde{X})
\end{equation}
where $\tilde{X}\sim \tilde{f}$ and $L$, the likelihood ratio, is defined as 
$$
L(x)=\frac{f(x)}{\tilde{f}(x)}=\frac{re^{-\frac12 (x-\mu)^T\Sigma^{-1}(x-\mu)}}{e^{-\frac12(x-a_1)^T\Sigma^{-1}(x-a_1)}+\cdots+e^{-\frac12(x-a_r)^T\Sigma^{-1}(x-a_r)}}.
$$
\par 
As a summary, after computing the dominating points $A=\{a_1,\dots,a_r\}$ using Algorithm \ref{algo:main}, we estimate the probability of interest via Algorithm \ref{algo:IS}.
\begin{algorithm}[h]
\KwIn{Prediction model $g(x)$, threshold $\gamma$, dominating points $A=\{a_1,\dots,a_r\}$, simulation size $N$.}
\KwOut{Estimated rare-event probability $\hat{p}$.}
\nl Generate $\tilde{X}_1,\dots,\tilde{X}_N\sim \frac{1}{r}N(a_1,\Sigma)+\dots+\frac{1}{r}N(a_r,\Sigma)$;\\
\nl Compute $\hat{p}=\frac{1}{N}\sum_{i=1}^N I(g(\tilde{X}_i)\geq\gamma)L(\tilde{X}_i)$ where 
$$
L(x)=\frac{re^{-\frac12 (x-\mu)^T\Sigma^{-1}(x-\mu)}}{e^{-\frac12(x-a_1)^T\Sigma^{-1}(x-a_1)}+\cdots+e^{-\frac12(x-a_r)^T\Sigma^{-1}(x-a_r)}};
$$\\
\nl \bf End\
    \caption{{\bf Construct the IS estimator with all the dominating points.} \label{algo:IS}}

\end{algorithm}
\par
The efficiency guarantee of the proposed IS estimator \eqref{eq:proposed_IS} is given by:
\begin{theorem}
Suppose that the input $X\sim N(\mu,\Sigma)$ and the prediction model $g(\cdot)$ is a piecewise linear function such that $P(g(X)\geq\gamma)>0$ for any $\gamma\in\R$. The IS estimator $Z$ is defined in \eqref{eq:proposed_IS}. Then we have that $\tilde{E}[Z^2]/\tilde{E}[Z]^2$ is at most polynomially growing in $\gamma$. That is, $Z$ is asymptotically optimal.
\label{thm:efficiency}
\end{theorem}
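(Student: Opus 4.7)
\textbf{Proof plan for Theorem \ref{thm:efficiency}.} My approach is the standard second-moment argument for mixture-IS on a rare-event set split by dominating half-spaces, combined with the large-deviations asymptotic for $p_\gamma$ that the paper develops in Section \ref{sec:cuttingplain} (Corollary \ref{thm:asymptotic}). The first step is to reduce the second moment to an expectation under the nominal distribution via the standard likelihood-ratio identity
\begin{equation*}
\tilde E[Z^2] \;=\; \tilde E\!\left[L(\tilde X)^2 I(g(\tilde X)\geq\gamma)\right] \;=\; E\!\left[L(X) I(g(X)\geq\gamma)\right].
\end{equation*}
I would then partition the rare-event set into the regions $\mathcal R_1,\ldots,\mathcal R_r$ produced by Algorithm \ref{algo:main} and bound $L$ on each $\mathcal R_i$ separately. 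The key algebraic observation is that for any $i$,
\begin{equation*}
\frac{\phi(x;\mu,\Sigma)}{\phi(x;a_i,\Sigma)} \;=\; \exp\!\Bigl(-(a_i-\mu)^{T}\Sigma^{-1}(x-a_i) - \tfrac12(a_i-\mu)^{T}\Sigma^{-1}(a_i-\mu)\Bigr),
\end{equation*}
so the supporting-hyperplane inequality $(a_i-\mu)^{T}\Sigma^{-1}(x-a_i)\ge 0$ that defines $\mathcal R_i$ gives $\phi(x;\mu,\Sigma)/\phi(x;a_i,\Sigma)\le e^{-I(a_i)}$ with $I(a)=\tfrac12(a-\mu)^{T}\Sigma^{-1}(a-\mu)$.

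Next, I would drop all but the $i$-th term in the mixture denominator to obtain, for $x\in\mathcal R_i$,
\begin{equation*}
L(x) \;\le\; r\,\frac{\phi(x;\mu,\Sigma)}{\phi(x;a_i,\Sigma)} \;\le\; r\,e^{-I(a_i)}.
\end{equation*}
Summing over the partition and using that $I(a_1)=\min_i I(a_i)$ yields the clean bound
\begin{equation*}
\tilde E[Z^2] \;\le\; \sum_{i=1}^{r} r\,e^{-I(a_i)}\,P(X\in\mathcal R_i) \;\le\; r\,e^{-I(a_1)}\,p_\gamma.
\end{equation*}
Dividing by $\tilde E[Z]^2=p_\gamma^2$ gives $\tilde E[Z^2]/\tilde E[Z]^2 \le r\,e^{-I(a_1)}/p_\gamma$, so the entire task reduces to showing that $p_\gamma$ does not decay faster than $e^{-I(a_1)}$ by more than a polynomial factor in $\gamma$.

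This is where I would invoke the large-deviations asymptotic derived earlier in the paper: Corollary \ref{thm:asymptotic} states that for piecewise linear $g$ with Gaussian input, $p_\gamma$ admits a sharp asymptotic of the form $C(\gamma)e^{-I(a_1)}$ with $C(\gamma)$ polynomial in $\gamma$ (in the weaker log sense, $-\log p_\gamma = I(a_1)+o(I(a_1))$). Plugging this in immediately gives $\tilde E[Z^2]/\tilde E[Z]^2 = O(\mathrm{poly}(\gamma))$, hence asymptotic optimality through the equivalence of the two formulations of \eqref{asymptotic efficiency}. To make this rigorous one must also check that Algorithm \ref{algo:main} produces finitely many dominating points $\{a_1,\ldots,a_r\}$; this follows from the fact that $\{x:g(x)\ge\gamma\}$ is a finite union of polyhedra (so that each cutting-plane iteration permanently removes one of finitely many ``outer'' faces), but it is a detail worth verifying carefully.

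The truly load-bearing step is the sharp lower bound $p_\gamma \ge \mathrm{poly}(\gamma)^{-1} e^{-I(a_1)}$, i.e.\ the large-deviations prefactor computation; the upper bound on $\tilde E[Z^2]$ is essentially algebra once the dominating-point geometry is in place. I expect the main obstacle to lie in making sure the asymptotic invoked from Corollary \ref{thm:asymptotic} genuinely covers the piecewise-linear regime considered here, in particular that the leading-order rate is exactly $I(a_1)$ (and not strictly smaller due to some lower-dimensional face contributing non-trivially), and that the polynomial prefactor survives the mixture construction. Everything else then assembles mechanically into the claim.
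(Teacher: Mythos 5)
Your overall architecture coincides with the paper's: the likelihood-ratio identity, the per-region bound $L(x)\le r\,e^{-I(a_i)}$ on $\mathcal R_i$ via the supporting-hyperplane condition $(a_i-\mu)^{T}\Sigma^{-1}(x-a_i)\ge 0$, and the resulting inequality $\tilde E[Z^2]\le r\,e^{-I(a_1)}p_\gamma$ are exactly the closing steps of the paper's proof. The problem is the step you yourself flag as load-bearing: you do not prove the sharp lower bound $p_\gamma\ge \mathrm{poly}(\gamma)^{-1}e^{-I(a_1)}$, and the way you propose to obtain it does not work. First, Corollary \ref{thm:asymptotic} cannot simply be invoked here: in the paper it is a by-product of the proof of Theorem \ref{thm:efficiency} (its proof literally points to equation \eqref{eqn:corollary} inside the theorem's proof), so citing it is circular. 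Second, even granting it, the corollary only states the logarithmic asymptotic $-\log p_\gamma=(1+o(1))\,I(a_1)$, which yields $e^{-I(a_1)}/p_\gamma=e^{o(\gamma^2)}$ --- enough for the log-ratio criterion \eqref{asymptotic efficiency}, but not for the claimed polynomial growth of $\tilde E[Z^2]/\tilde E[Z]^2$. The polynomial prefactor is precisely what must be established, and you attribute to the corollary a sharper statement than it contains.

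This missing piece is where essentially all of the paper's work goes. For each polyhedral piece $P_1=\{x:Ax\ge t\}$ of the rare-event set (with only $t_1=\gamma+c$ scaling in $\gamma$), the paper sandwiches $P(X\in P_1)$ between two quantities with the same exponential rate and explicit polynomial prefactors: an upper bound from a relaxed polyhedron $P_2\supseteq P_1$ built from the linearly independent active constraints at the minimizer $x^*$, whose tail is computed by Lemma \ref{thm:lemma} (a nontrivial adaptation of Hashorva's Gaussian tail asymptotics, needed because $AA^{T}$ may be singular and the conditional-probability representation there does not directly apply); and a matching lower bound obtained by restricting the integral to a small ellipsoid around $x^*$ on which $P_1$ and $P_2$ coincide, which in turn requires ruling out that $\overline{P_2\setminus P_1}$ approaches $x^*$ as $\gamma\to\infty$. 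None of this assembles mechanically; to complete your proof you must supply this analysis or an equivalent sharp lower bound. Your remark about verifying finiteness of the dominating-point set is a fair secondary point, but it is not where the difficulty lies.
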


Theorem \ref{thm:efficiency} is proved by constructing an upper bound for the relative error, which in turn depends on the asymptotic approximation of probability on polytope sets using dominating points. Our proof leverages the results in \cite{Hashorva2003gaussian} on the tail exceedance asymptotic of $P(N(0,\Sigma_n)\geq t_n)$ where $\Vert t_n\Vert\rightarrow\infty$ as $n\rightarrow\infty$, but requires substantial generalization. Note that Theorem \ref{thm:efficiency} only makes the very general assumptions that $g$ is piecewise linear and the probability $P(g(X)\geq\gamma)$ is nondegenerate (i.e., non-zero) for any $\gamma\in\R$. Our result applies to, for example, the probability $P(AX\geq t)$ where $A$ is a constant matrix and $t-\gamma e_1$ is a constant vector (here, $e_1=(1,0,\dots,0)^T$). If $AA^T$ is not invertible, then it is not easily reducible to the setting studied in \cite{Hashorva2003gaussian}. To achieve a general result, we carefully construct a superset and a subset of the rare-event set to derive tight enough upper and lower bounds for the probability of interest, in which we analyze the involved asymptotic integrals instead of using the conditional probability representation in \cite{Hashorva2003gaussian} that is not directly applicable in our setting.
For the detailed proof, please refer to Section \ref{sec:guarantee}. 

A by-product in deriving Theorem \ref{thm:efficiency} is the large deviations probability asymptotic for $P(g(X)\geq\gamma)$:
\begin{corollary}
Suppose that the input $X\sim N(\mu,\Sigma)$ and the prediction model $g(\cdot)$ is a piecewise linear function such that $P(g(X)\geq\gamma)>0$ for any $\gamma\in\R$. Denote $a=\arg\min\{(x-\mu)^T\Sigma^{-1}(x-\mu):g(x)\geq\gamma\}$. Then $-\log P(g(X)\geq\gamma)=(1+o(1)) (a-\mu)^T\Sigma^{-1}(a-\mu)/2$ as $\gamma\rightarrow\infty$. In particular, $P(g(X)\geq\gamma)$ is exponentially small in $\gamma$.
\label{thm:asymptotic}
\end{corollary}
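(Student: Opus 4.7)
The plan is to sandwich $P(g(X)\geq\gamma)$ between two quantities whose logarithms both equal $-\tfrac12\|a-\mu\|_{\Sigma^{-1}}^2$ plus only polynomially large corrections, where $\|v\|_{\Sigma^{-1}}^2 := v^T\Sigma^{-1}v$. Since $g$ is piecewise linear, the rare-event set $S_\gamma := \{x:g(x)\geq\gamma\}$ is a finite union of polyhedra, so Algorithm~\ref{algo:main} terminates after finitely many iterations and produces dominating points $a_1,\dots,a_r$; by construction $a=a_1$ achieves $\min_i\|a_i-\mu\|_{\Sigma^{-1}}$. The corollary will then follow from matching upper and lower bounds with exponent $\tfrac12\|a-\mu\|_{\Sigma^{-1}}^2$ after dividing through by that quantity, which goes to infinity with $\gamma$ because a piecewise linear $g$ cannot take arbitrarily large values on a bounded region.

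For the upper bound I would use the half-space covering that Algorithm~\ref{algo:main} guarantees: $S_\gamma\subseteq\bigcup_{i=1}^r H_i$ where $H_i:=\{x:(a_i-\mu)^T\Sigma^{-1}(x-a_i)\geq 0\}$. A union bound reduces the estimate to $r$ one-dimensional Gaussian tails: under $X\sim N(\mu,\Sigma)$, the scalar $(a_i-\mu)^T\Sigma^{-1}(X-a_i)$ is normal with mean $-\|a_i-\mu\|_{\Sigma^{-1}}^2$ and variance $\|a_i-\mu\|_{\Sigma^{-1}}^2$, so the Mills-ratio bound gives $P(X\in H_i)\leq c\,\|a_i-\mu\|_{\Sigma^{-1}}^{-1}\exp(-\|a_i-\mu\|_{\Sigma^{-1}}^2/2)$. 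Summing and keeping only the smallest exponent (that of $a=a_1$) yields $-\log P(g(X)\geq\gamma)\geq \tfrac12\|a-\mu\|_{\Sigma^{-1}}^2 - O(\log\gamma)$.

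For the lower bound I would argue locally around $a$. Piecewise linearity places $a$ on a facet of one of the polyhedral pieces of $S_\gamma$ and yields a full-dimensional polyhedral cone $C$ with apex at $a$ such that $a+C$ is locally contained in $S_\gamma$; moreover first-order optimality of $a$ forces $(a-\mu)^T\Sigma^{-1}y\geq 0$ for all $y\in C$. Translating $x=a+y$, the density factors as $\phi(a+y;\mu,\Sigma)=\phi(a;\mu,\Sigma)\exp(-(a-\mu)^T\Sigma^{-1}y-\tfrac12 y^T\Sigma^{-1}y)$. Integrating over $y\in C\cap B(0,\epsilon_\gamma)$ with the shrinking radius $\epsilon_\gamma=c/\|a-\mu\|_{\Sigma^{-1}}$ keeps both terms in the exponent uniformly bounded, so $P(X\in S_\gamma)\geq c'\epsilon_\gamma^d\,\phi(a;\mu,\Sigma)$, giving $-\log P(g(X)\geq\gamma)\leq \tfrac12\|a-\mu\|_{\Sigma^{-1}}^2+O(\log\gamma)$. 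Combining the two bounds proves the $(1+o(1))$ asymptotic, and the ``exponentially small in $\gamma$'' conclusion then follows from the positive homogeneity of piecewise linear $g$, which forces $\|a-\mu\|_{\Sigma^{-1}}$ to grow at least linearly in $\gamma$.

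The main obstacle will be the lower bound: one must verify that the relevant local cone $C$ is genuinely full-dimensional (rather than lower-dimensional at degenerate intersections of the pieces of $S_\gamma$) and that the Laplace-type integral really contributes only a polynomial factor despite the linear term $(a-\mu)^T\Sigma^{-1}y$ in the exponent having magnitude that grows with $\gamma$; the shrinking-radius trick above is precisely what controls this, but its correctness hinges on the cone geometry. Both of these are the same technical ingredients developed for Theorem~\ref{thm:efficiency} by adapting \cite{Hashorva2003gaussian} to the not-necessarily-invertible $AA^T$ setting, so Corollary~\ref{thm:asymptotic} falls out as a by-product of reading off the exponential rate from the sandwich argument built there.
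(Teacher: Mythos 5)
Your argument is correct in substance but takes a genuinely different, more elementary route than the paper. The paper proves Corollary~\ref{thm:asymptotic} as display \eqref{eqn:corollary} inside the proof of Theorem~\ref{thm:efficiency}: it decomposes $\{g\geq\gamma\}$ into the convex polyhedral pieces $\tilde{\mathcal{R}}_i$ coming from the affine pieces of $g$ (not from Algorithm~\ref{algo:main}), and for each piece it sandwiches $P(X\in P_1)$ between $P(X\in P_2)$ for a superset $P_2$ built from the linearly independent active constraints at $x^*$ --- whose exact asymptotic \eqref{eqn:asyptoticP2} is obtained from the adapted Hashorva lemma (Lemma~\ref{thm:lemma}) --- and a lower bound from integrating over a fixed-radius ellipsoidal neighborhood of $x^*$ on which $P_1$ and $P_2$ coincide. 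You instead get the upper bound by a union bound over the half-spaces $H_i$ produced by Algorithm~\ref{algo:main} plus a one-dimensional Gaussian tail estimate, and the lower bound by a Laplace-type integral over a tangent cone intersected with a ball of shrinking radius $c/\|a-\mu\|_{\Sigma^{-1}}$. Both of your bounds are sound for the logarithmic asymptotic, which is all the corollary claims; what the paper's heavier machinery buys is the sharp (constant-level) asymptotic \eqref{eqn:asyptoticP2}, while your route avoids Lemma~\ref{thm:lemma} entirely. Two caveats worth noting. First, your upper bound leans on Algorithm~\ref{algo:main} terminating with a finite covering $S_\gamma\subseteq\bigcup_i H_i$ with $r$ not growing too fast in $\gamma$; neither you nor the paper proves termination, and you could sidestep this by applying the same half-space bound piecewise, using that each $\tilde{\mathcal{R}}_i$ is convex so that $\tilde{\mathcal{R}}_i\subseteq\{x:(\tilde a_i-\mu)^T\Sigma^{-1}(x-\tilde a_i)\geq 0\}$ by first-order optimality --- this gives the upper bound with $r$ replaced by the fixed number of pieces of $g$. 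Second, your justification of the final clause via ``positive homogeneity'' is off (the affine pieces are not homogeneous); the correct reason is the one you gave earlier, namely that a piecewise affine $g$ satisfies $g(x)\leq L\|x\|+M$, so $\|a-\mu\|$ must grow at least linearly in $\gamma$. The full-dimensionality concern you flag for the local cone is real but is shared with (and likewise glossed over in) the paper, which implicitly assumes the minimizing piece has positive probability.
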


The theoretical guarantee given by Theorem \ref{thm:efficiency} justifies the sequential MIP algorithm for searching dominating points. The resulting mixture IS distribution is asymptotically optimal. We point out some related works that use mixture distributions that are related to our proposed method. In \cite{ahn2018efficient,owen2019importance}, mixture IS distributions are constructed based on separating rare-event set with half-spaces. However, in these works, the rare-event set is restricted to be a union of half-spaces, and these half-spaces are assumed to be known. The use of Algorithm \ref{algo:main} allows us to deal with more general rare-event sets. Moreover, in relation to Corollary \ref{thm:asymptotic}, we also mention the work \cite{honnappa2018dominating} that derives an asymptotic result for Gaussian probabilities using dominating points. However, they focus on convex hitting sets where the entire set is scaled with a rarity parameter, which is different from our settings. First, our rare-event set is not necessarily convex. Second, even if we separate our rare-event set into the union of convex sets, their results still cannot be applied, since in our settings some linear constraints are allowed to be fixed instead of scaling with $\gamma$.


The proposed IS scheme can be extended to problems with Gaussian mixture inputs. Suppose the Gaussian mixture has $m$ components, so that $X \sim \sum_{j=1}^m \pi_j N(\mu_j,\Sigma_j)$. For each component $j$, we implement Algorithm \ref{algo:main} with input distribution $N(\mu_j,\Sigma_j)$ to obtain dominating point set $A_j$ (with cardinality $r_j$). The proposed IS distribution is given by $\tilde f(x)=\sum_{j=1}^m \sum_{i=1}^{r_j}  1/r_j \pi_j N(a_{ji},\Sigma_j) $. We summarize the procedure as Algorithm \ref{algo:mixture}.
\begin{algorithm}[h]
\KwIn{Prediction model $g(x)$, threshold $\gamma$, input distribution $\sum_{j=1}^m \pi_j N(\mu_j,\Sigma_j)$, simulation size $N$.}
\KwOut{Estimated rare-event probability $\hat{p}$.}
\nl Implement Algorithm \ref{algo:main} with input distribution $N(\mu_j,\Sigma_j)$ to get $A_j=\{a_{j1},\dots,a_{jr_j}\}$;\\
\nl Generate $\tilde{X}_1,\dots,\tilde{X}_N\sim \sum_{j=1}^m\sum_{i=1}^{r_j}1/r_j\pi_j N(a_{ji},\Sigma_j)$;\\
\nl Compute $\hat{p}=\frac{1}{N}\sum_{i=1}^N I(g(\tilde{X}_i)\geq\gamma)L(\tilde{X}_i)$ where 
\begin{equation}
L(x)=\frac{\sum_{j=1}^m \pi_j|\Sigma_j|^{-\frac{1}{2}} e^{-\frac{1}{2}(x-\mu_j)^T\Sigma_j^{-1}(x-\mu_j)}}{\sum_{j=1}^m\sum_{i=1}^{r_j}1/r_j\pi_j|\Sigma_j|^{-\frac{1}{2}} e^{-\frac{1}{2}(x-a_{ji})^T\Sigma_j^{-1}(x-a_{ji})}};
\label{eqn:mixlikelihood}
\end{equation}\\
\nl \bf End\
    \caption{{\bf Procedure for Gaussian mixture distributed input.} \label{algo:mixture}}

\end{algorithm}

Similar to Algorithm \ref{algo:IS}, we have the efficiency guarantee for Algorithm \ref{algo:mixture}:
\begin{corollary}
Suppose that the input $X \sim \sum_{j=1}^m \pi_j N(\mu_j,\Sigma_j)$ and the prediction model $g(\cdot)$ is a piecewise linear function such that $P(g(X)\geq\gamma)>0$ for any $\gamma\in\R$. The IS estimator $Z$ is defined as $I(g(\tilde{X})\geq\gamma)L(\tilde{X})$ where $\tilde{X}\sim \sum_{j=1}^m\sum_{i=1}^{r_j}1/r_j\pi_j N(a_{ji},\Sigma_j)$ and $L(x)$ is as defined in \eqref{eqn:mixlikelihood}. Then we have that $\tilde{E}[Z^2]/\tilde{E}[Z]^2$ is at most polynomially growing in $\gamma$. That is, $Z$ is asymptotically optimal.
\label{thm:corollary}
\end{corollary}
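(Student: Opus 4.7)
The plan is to reduce the Gaussian-mixture case to the single-Gaussian setting already handled by Theorem \ref{thm:efficiency}. Writing the nominal density as $f(x)=\sum_{j=1}^m\pi_j\phi_j(x)$ with $\phi_j(x)=\phi(x;\mu_j,\Sigma_j)$, and the IS density as $\tilde{f}(x)=\sum_{j=1}^m\pi_j h_j(x)$ where $h_j(x)=\frac{1}{r_j}\sum_{i=1}^{r_j}\phi(x;a_{ji},\Sigma_j)$, one can read $\tilde{f}$ as a $\pi$-weighted mixture of the single-Gaussian IS densities produced by Algorithm \ref{algo:main} on each component. Introduce, for each $j$, the single-Gaussian IS estimator $Z_j=I(g(Y)\geq\gamma)\phi_j(Y)/h_j(Y)$ with $Y\sim h_j$, and set $p_j=P(g(X_j)\geq\gamma)$ for $X_j\sim N(\mu_j,\Sigma_j)$, so that $p=\sum_j\pi_j p_j$.

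The main step is to bound $\tilde{E}[Z^2]=\int I(g(x)\geq\gamma)\,f(x)^2/\tilde{f}(x)\,dx$ by a sum of component-wise second moments. Using Cauchy--Schwarz in the form $(\sum_j\pi_j\phi_j)^2\leq m\sum_j\pi_j^2\phi_j^2$ together with the pointwise inequality $\tilde{f}(x)\geq\pi_j h_j(x)$ (valid for every $j$), one obtains
\begin{equation*}
\frac{f(x)^2}{\tilde{f}(x)}\leq m\sum_{j=1}^m\pi_j\,\frac{\phi_j(x)^2}{h_j(x)},
\end{equation*}
and hence $\tilde{E}[Z^2]\leq m\sum_j\pi_j\,\tilde{E}_j[Z_j^2]$, where $\tilde{E}_j$ denotes expectation under $h_j$.

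Theorem \ref{thm:efficiency} then applies componentwise. The set $\{a_{j1},\dots,a_{jr_j}\}$ produced by Algorithm \ref{algo:main} with input distribution $N(\mu_j,\Sigma_j)$ satisfies Definition \ref{def:dominant} for that component, since this is exactly encoded in the stopping criterion of the while-loop in line \ref{algo:while_line}. Hence $\tilde{E}_j[Z_j^2]\leq q_j(\gamma)\,p_j^2$ for some polynomially growing $q_j$. Combining with the trivial bound $\pi_j p_j\leq p$ yields $\pi_j\tilde{E}_j[Z_j^2]\leq q_j(\gamma)\,p^2/\pi_j$, and therefore
\begin{equation*}
\frac{\tilde{E}[Z^2]}{p^2}\leq m\sum_{j=1}^m\frac{q_j(\gamma)}{\pi_j},
\end{equation*}
which is polynomially growing in $\gamma$; this delivers asymptotic optimality.

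The main obstacle I anticipate is the first bound on $f^2/\tilde{f}$: both numerator and denominator are mixtures, and a naive single-term replacement of $\tilde{f}$ would destroy the $\pi_j$ weights that are needed when re-expressing the bound in terms of $p^2$. The Cauchy--Schwarz step paired with the pointwise inequality $\tilde{f}\geq\pi_j h_j$ gives the termwise decomposition that is precisely compatible with Theorem \ref{thm:efficiency}. After that, translating the componentwise $p_j^2$ bounds into a $p^2$ bound via $p\geq\pi_j p_j$ is routine, and verifying that Algorithm \ref{algo:main} applied to each component produces a valid dominating-point set is immediate from the termination of its while-loop.
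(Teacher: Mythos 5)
Your proof is correct, but it is organized quite differently from the paper's. The paper re-derives the likelihood-ratio bound from scratch for the mixture: on the rare-event set it bounds each component ratio by $r_j e^{-(a_{j1}-\mu_j)^T\Sigma_j^{-1}(a_{j1}-\mu_j)/2}$ (using the covering property of the dominating points, as in the proof of Theorem \ref{thm:efficiency}), takes the maximum over $j$ via the mediant inequality to get a uniform bound on $L(x)$, and then divides by a freshly derived large-deviations asymptotic $-\log P(g(X)\geq\gamma)\sim \min_j\{(a_{j1}-\mu_j)^T\Sigma_j^{-1}(a_{j1}-\mu_j)/2\}$ for the mixture input. You instead treat Theorem \ref{thm:efficiency} as a black box: the Cauchy--Schwarz bound $f^2\leq m\sum_j\pi_j^2\phi_j^2$ combined with $\tilde f\geq \pi_j h_j$ gives $\tilde E[Z^2]\leq m\sum_j\pi_j\tilde E_j[Z_j^2]$, and the elementary inequality $\pi_j p_j\leq p$ converts the componentwise guarantees $\tilde E_j[Z_j^2]\leq q_j(\gamma)p_j^2$ into $\tilde E[Z^2]/p^2\leq m\sum_j q_j(\gamma)/\pi_j$. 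Your route buys modularity and avoids re-proving both the likelihood-ratio bound and the mixture large-deviations asymptotic, at the cost of constants depending on $m$ and $1/\pi_j$ (harmless here since these are fixed). The paper's route is slightly more self-contained in that it also yields the mixture analogue of Corollary \ref{thm:asymptotic} as a by-product. All the individual steps in your argument check out, including the identification of $\int_{g\geq\gamma}\phi_j^2/h_j$ with $\tilde E_j[Z_j^2]$ and the applicability of Theorem \ref{thm:efficiency} to each component.
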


When we apply Algorithm \ref{algo:main} to find all dominating points, the key is to be able to solve the optimization problems in \eqref{eq:opt_ite}. We will investigate this in the next section.

\section{TRACTABLE OPTIMIZATION FORMULATION FOR PREDICTION MODELS} 
\label{sec:fomulation}
We discuss how to formulate the optimization problems in Algorithm \ref{algo:main} as an MIP with quadratic objective function and linear constraints. Sections \ref{sec:RF} and \ref{sec:neural} focus on random forest and neural network structures respectively. 

\subsection{Tractable Formulation for Random Forest}\label{sec:RF}

To look for dominating points in a random forest or tree ensemble,  we follow the route in \citeN{mivsic2017optimization} that studies optimization over these models. We consider a random forest as follows. The input $x$ has $d$ dimensions. Suppose the model consists of $T$ trees $f_1,...,f_T$. In each tree  $f_t$, we use $a_{i,j}$ to denote the $j$th unique split point for the $i$th dimension of the input $x$, such that $a_{i,1}<a_{i,2}<...<a_{i,K_i}$, where $K_i$ is the number of unique split points for the $i$th dimension of $x$. 

Following the notations in \citeN{mivsic2017optimization}, let $\textbf{leaves}(t)$ be the set of leaves (terminal nodes) of tree $t$ and $\textbf{splits}(t)$ be the set of splits (non-terminal nodes) of tree $t$. In each split $s$, we let $\textbf{left}(s)$ be the set of leaves that are accessible from the left branch (the query at $s$ is true), and $\textbf{right}(s)$ be the set of leaves that are accessible from the right branch (the query at $s$ is false). For each node $s$, we use $\textbf{V}(s)\in \{1,...,d\}$ to denote the dimension that participate in the node and $\textbf{C}(s)\in \{1,...,K_{\textbf{V}(s)}\}$ to denote the set of values of dimension $i$ that participate in the split query of $s$ ($\textbf{C}(s)=\{j\}$ and $\textbf{V}(s)=\{i\}$ indicate the query $x_{i} \leq a_{i,j} $). We use $\lambda_t$ to denote the weight of tree $t$ ($\sum_{t=1}^T \lambda_t=1$). For each  $l \in \textbf{leaves}(t)$, $p_{t,l}$ denotes the output for the $l$th leaf in tree $t$.

To formulate the random forest optimization as an MIP, we introduce binary decision variables $z_{i,j}$ and $y_{t,l}$. First, we have \begin{equation}\label{eq:x_to_z}
z_{i,j}=I(x_i \leq a_{i,j}),\ i=1,...,d,\ j=1,...,K_i.
\end{equation}
We then use $y_{t,l}=1$ to denote that tree $t$ outputs the prediction value $p_{t,l}$ on leaf $l$, and $y_{t,l}=0$ otherwise. We use $\textbf{z},\textbf{y}$ to represent the vectors of $z_{i,j}$ and $y_{t,l}$ respectively. For the input $x$, we assume that $x \in [-B,B]^d$ and $|a_{i,j}| \leq B$. Then \eqref{eq:x_to_z} is represented by the following constraints \begin{align*}
 & x_i \leq a_{i,j} +2(1-z_{i,j}) B \\
 & x_i > a_{i,j} -2z_{i,j} B.
\end{align*}

Now we formulate \eqref{eq:opt_ite} with $A=\emptyset$ as the following MIP \begin{align} \label{eq:forest}
\min_{x,\textbf{y},\textbf{z}} &\ \ (x-\mu)^T\Sigma^{-1}(x-\mu)  &\\ \notag
s.t. &\ \     \sum_{t=1}^T \sum_{l\in \textbf{leaves}(t)} \lambda_t p_{t,l} y_{t,l}   \geq \gamma   & \\ \notag
&\ \  \sum_{l\in \textbf{leaves}(t)} y_{t,l}=1 ,\ \forall t \in \{1,...,T\} &\\ \notag
&\ \  \sum_{l\in \textbf{left}(s)} y_{t,l} \leq \sum_{j\in \textbf{C}(s)} z_{\textbf{V}(s),j},\ \forall t \in \{1,...,T\},\ s\in  \textbf{splits}(t)&\\ \notag
&\ \  \sum_{l\in \textbf{right}(s)} y_{t,l} \leq 1- \sum_{j\in \textbf{C}(s)} z_{\textbf{V}(s),j},\ \forall t \in \{1,...,T\},\ s\in  \textbf{splits}(t)&\\ \notag
&\ \  z_{i,j}\leq z_{i,j+1},\ \forall i\in \{1,...,d\},\ j\in \{1,...,K_{i}-1\}&\\ \notag
&\ \  z_{i,j}  \in \{0,1\},\ \forall i\in\{1,...,d\},\ j\in \{1,...,K_{i}\} &\\ \notag
&\ \ y_{t,l} \geq 0,\ \forall t \in \{1,...,T\},\ l\in \textbf{leaves}(t) &\\ \notag
 &\ \  x_i \leq a_{i,j} +2(1-z_{i,j}) B ,\ \forall i\in\{1,...,d\},\ j\in \{1,...,K_{i}\} &\\ \notag
 &\ \  x_i > a_{i,j} -2z_{i,j} B,\ \forall i\in\{1,...,d\},\ j\in \{1,...,K_{i}\}  .& \notag
\end{align}
This formulation has a quadratic objective function and linear constraints. Similarly, we can formulate \eqref{eq:opt_ite} with $A \neq \emptyset$ by adding linear constraints $(a_i-\mu)^T\Sigma^{-1}(x-a_i)<0,\ \mbox{$\forall a_i \in A$}$ to \eqref{eq:forest}. Note that both the number of decision variables and the number of constraints are linearly dependent on the total number of nodes in the random forest.

\subsection{Tractable Formulation for Neural Network}\label{sec:neural}

A neural network $g(\cdot)$ is a network that connects a large number of computational units (known as neurons) \cite{goodfellow2016deep,chua1988cellular}. According to its task, a network has a specific architecture that usually involves multiple layers of neurons and different operations over the neurons. For simplification, here we consider layers with consecutive architecture and each layer of the neural network only contains one specific structure. 

The key part of the reformulation is to deal with the non-linearity brought by the maximum function. Our treatment of the maximum function follows from \cite{tjeng2017verifying}, which rewrites neural network structures into linear equations with binary variables. 

In order to obtain tractable formulation for the constraint $g(x)\geq \gamma$, we independently handle each single layer in $g(\cdot)$. Assume we have $l$ layers in $g(\cdot)$, where $g_i(\cdot)$ denotes the $i$th layer. Given input $x$, the output of the neural network can be represented as $g(x)=g^l(g^{l-1}(...g^1(x)))$. For convenience, we introduce $x_i$ to denote the output of the $i$th layer (note that it is also the input for the $i+1$th layer). In other words, for the $i$th layer we have $x_i=g^i(x_{k-1})$. Using these notations, we can transform the constraint $g(x)\geq \gamma$ into a sequence of constraints: \begin{align*}
        &\ \ x_{l} \geq \gamma, &\\ 
         &\ \  x_{l} = g^{l}(x_{l-1}), &\\
    &\ \  x_{l-1} = g^{l-1}(x_{l-2}), &\\ 
    &\ \  ...,\ &\\ 
    &\ \  x_{1} = g^1(x) .
    \end{align*}
This transformation makes clear that the constraints altogether are tractable if the constraint for each layer (i.e. $x_i=g^i(x_{k-1})$) is tractable 
. Note that both the number of decision variables and the number of constraints are linearly dependent on the total number of neurons in the neural network. In the rest of this section, we discuss the reformulation of neural network layers concerning different structures. 

\subsubsection{Fully Connected Layer}
In a fully connected layer, each neuron performs a linear transformation on the input. We consider a layer with $n$ neurons and the input for this layer is a vector $x\in \R^m$. We use $w_i\in \R^m$ and $b_i\in\R$ to denote the weight and bias respectively for the linear transformation in the $i$th neuron. Then the output of the $i$th neuron can be represented by $y_i=w_i^T x+b_i$. To summarize, the output of the layer, $y=[y_1;y_2;...;y_n]\in \R^n$, is given by
$$y={W}^T x + b,$$ where $W=[w_1,w_2,...,w_n]$ and $b=[b_1;b_2;...;b_n]$.

\subsubsection{ReLU Layer}
In a rectified linear unit (ReLU) layer, negative elements in the input are replaced by 0's. For the $i$th input, the output is given by $y_i=max\{x_i,0\}$. This can be represented by \begin{align*}
        &\ \  y_i \leq x_i - l(1-z_{i}), &\\ 
    &\ \  y_i\geq  x_i, &\\ 
    &\ \  y_i \leq u z_{i},\ &\\ 
    &\ \  y_i \geq 0,\  &\\ 
    &\ \  z_{i}  \in \{0,1\}\ ,
    \end{align*}
    where $z_i \in \{0,1\} $ is a binary variable, $u$ and $l$ are the upper and lower bounds of the input respectively. 

\subsubsection{Normalization Layer}
In a normalization layer, the input is normalized and linearly transformed to make the gradient decent algorithm more efficient. Again we assume the input is $x\in\R^m$ with a given normalization parameter $\mu \in \R^m$ and $\Sigma\in \R^{m\times m}$. Moreover, we have the transformation matrix $\gamma \in \R^{m\times m}$ and bias vector $\beta\in \R^m$. The output is given by $$y=\gamma \left( \Sigma^{-1/2} (x-\mu)  \right) +\beta. $$

\subsubsection{Pooling Layer}
In a pooling layer, a ``filter'' that can be applied to adjacent elements in a vector or matrix goes through the input with a certain stride. Such type of layer is used to summarize ``local'' information and reduce the dimension of the input. Max pooling and average pooling are two types of commonly used filters.

Suppose the input is represented by matrix $x\in \R^{m_1\times m_2} $, where $x_{ij}$ denotes the element in the $i$th row $j$th column. The size of the filter is $s_1\times s_2$ with stride $(s_1,s_2)$. The output have size $y\in\R^{n_1,n_2}$, where $n_1=m_1/s_1$ and $n_2=m_2/s_2$. We assume that the value of $s_1,s_2$ are carefully chosen so that $n_1$ and $n_2$ are integers.

For average pooling layer, we have $$y_{ij}=\frac{\sum_{r=(i-1)s_1+1}^{is_1}\sum_{c=(j-1)s_2+1}^{js_2}x_{rc}}{s_1s_2}$$
for $i=1,...,n_1$, $j=1,...,n_2$.

For max pooling layer, we have $y_{ij}=\max_{(r,c)\in S }x_{rc}$ for $i=1,...,n_1$, $j=1,...,n_2$, where $S=\{(r,c)|r=(i-1)s_1+1,...,is_1,c=(j-1)s_2+1,...,js_2\}$. The tractable formulation is given by\begin{align*}
        &\ \  y_{ij} \leq x_{rc} - (u-l)(1-z_{rc}), & (r,c)\in S\\ 
    &\ \  y_{ij}\geq  x_{rc}, & (r,c)\in S\\ 
     &\ \ \sum_{(r,c)\in S}  z_{rc}=1\\ 
    &\ \  z_{rc}  \in \{0,1\}, & (r,c)\in S.
    \end{align*}

\subsubsection{Convolutional Layer}
In a convolutional layer, several filters are used to extract features from the input. The input of the layer is $x\in \R^{m_1,m_2}$. Suppose we have $r$ filters and assume the filters have size $s_1\times s_2$ with stride $(t_1,t_2)$. We use $w_i \in \R^{t_1 t_2}$ and $b_i \in \R^{t_1 t_2}$ to denote the weight and bias for the $i$th filter. The output is $y\in \R^{n_1\times n_2 \times r}$, where $n_1=(m_1-s_1)/t_1$ and $n_2=(m_2-s_2)/t_2$. Again we assume the numbers are carefully chosen so that $n_1,n_2$ are integers. 

Then we have \begin{align*}
& y_{ijk}=w_k^T (\tilde{x}_{ij})+b_k,\\
& \tilde{x}_{ij}=[x_{(i-1)t_1+1,(j-1)t_2+1};x_{(i-1)t_1+2,(j-1)t_2+1};...;x_{(i-1)t_1+1,(j-1)t_2+2},...;x_{(i-1)t_1+s_1,(j-1)t_2+s_2}].
\end{align*}
for integers $1\leq i \leq n_1$, $1\leq j \leq n_2$ and $1\leq k \leq r$.

\subsubsection{Reformulation in the Output Layer}
Here we discuss the reformulation of the output layer, which also provides us clues on how other more general problems in classification tasks are potentially transformable into the constraint $g(x)\geq \gamma$. Although the output layer is usually highly nonlinear, we show how to formulate it as linear mixed-integer constraints. 

In classification tasks, the neural network usually uses a softmax layer as the output layer for training purposes. Suppose the classification problem has $n$ categories in total, the last layer inputs $x\in \R^n $ and outputs $y\in\R^n$ with $y_i=\frac{e^{x_i}}{\sum_{j=1}^{n} e^{x_j}}$. The prediction for classification is determined by the maximum value of $y_i$. Indeed, the result is equivalent if we determine the categories by the maximum value of $x_i$.

When the constraint is $g(X)=i$ or $g(X)\neq i$, we can use this equivalence to reformulate the last layer (and therefore complete the formulation for the whole network). Specifically, $g(X)=i$ can be formulated as $x_i\geq x_j,for\  j\neq i$ and $g(X)\neq i$ can be formulated as $x_i \leq \max_{j\neq i} {x_j}$, where $j\neq i$ denotes $j$ is an element for the set that contains all possible indexes except $i$. For tractable form, the latter formula can be further rewritten as:
\begin{align*}
        &\ \  x_i\leq x_j + (1-z_j)(u-l),\  j \neq i. &\\  \notag
        &\ \  \sum_{j\neq i} z_{j} \geq 1, &\\  \notag
        &\ \  z_{j}  \in \{0,1\},\ i\neq c.  \notag
\end{align*}

\section{Experiments}
\label{sec:experiments}

This section presents several experimental results using our Algorithm \ref{algo:main} for neural network and random forest predictors. In Section \ref{sec:toy}, we consider two simple toy examples. The first problem has one dominating point and the second problem has multiple dominating points. To illustrate the efficiency of the IS scheme, we compare it with the naive use of a uniform IS estimator. In Section \ref{sec:magic}, we consider a realistic problem generated from a classification data set with a high dimensional feature space. 

\subsection{Toy Problems}
\label{sec:toy}

Consider a problem where $X $ follows a distribution $f(x)$, and the set $\{x:g(x)\geq\gamma\}$ is known to lie inside $[l,u]^d$ where $d$ is the dimension of the input variable $X$. The uniform IS estimator is given by\begin{equation*}
Z_{uniform}= I(g(X) \geq \gamma) f(X) (u-l)^d,
\end{equation*}
where $X$ is generated from a uniform distribution on $[l,u]^d$. This estimator has a polynomially growing relative efficiency as the magnitude of the dominating points grows \cite{huang2018rare}, but the efficiency also depends significantly on the size of the bounded set, i.e., $l,u,d$. 

The first problem has input $x=[x_1,x_2]$ over the bounded space $[0,5]^2$. We generate 2,601 samples using a uniform grid over the space with a mesh of 0.1 on each coordinate and use the function \begin{equation} \label{eq:easy_function}
y(x)=(x_1-5)^3+(x_2-4.5)^3+(x_1-1)^2+x_2^2+500
\end{equation}
to label these samples. The dataset we obtained is denoted as $D=\{(X_n,Y_n)\}$. $g(x)$ is trained using $D$. We consider only $X$ in the region $[0,5]^2$, so that $g(x)$ can be thought of as being set to 0 outside this box. We use $\gamma =500$ in this example and the shape of the rare-event set $\{x:g(x) \geq \gamma\}$.

We first train a random forest $g(x)$, which ensembles three regression trees. The three regression trees are averaged and each of them has around 600 nodes. The rare-event set is presented in Figure \ref{fig:exp1_set_rf}. The dominating point is obtained by implementing Algorithm \ref{algo:main}, which is located at $(3.05,2.65)$. 
We recall the problem setting that the input $X$ follows a Gaussian distribution. In particular, we use Gaussian distributions $N(0,I \sigma^2)$, where $I$ denotes the identity matrix and $\sigma^2 \in \mathbb{R}^+$. In our experiment, we vary the value of $\sigma^2$ to create problems with different rarity, where a smaller $\sigma^2$ gives a rarer probability.

Figures \ref{fig:exp1_mean_rf} and \ref{fig:exp1_ci_rf} present the experimental results based on 50,000 samples. In Figure \ref{fig:exp1_mean_rf}, we observe that the estimates for the two IS schemes are similar in all considered cases. On the other hand, Figure \ref{fig:exp1_ci_rf} shows the relative error for the proposed IS is smaller in all $\sigma^2$ considered. Moreover, as the rarity increases, the relative error of the proposed IS increases from roughly 2.5 to 5, whereas the relative error of the uniform IS increases from 5 to 40. The slower increasing rate indicates that the proposed IS scheme is more efficient and the outperformance is stronger for rarer problems. 

\begin{figure}[t]
							\centering
							
							\begin{minipage}[b]{0.4\textwidth}
								\centering
								\includegraphics[width=\linewidth]{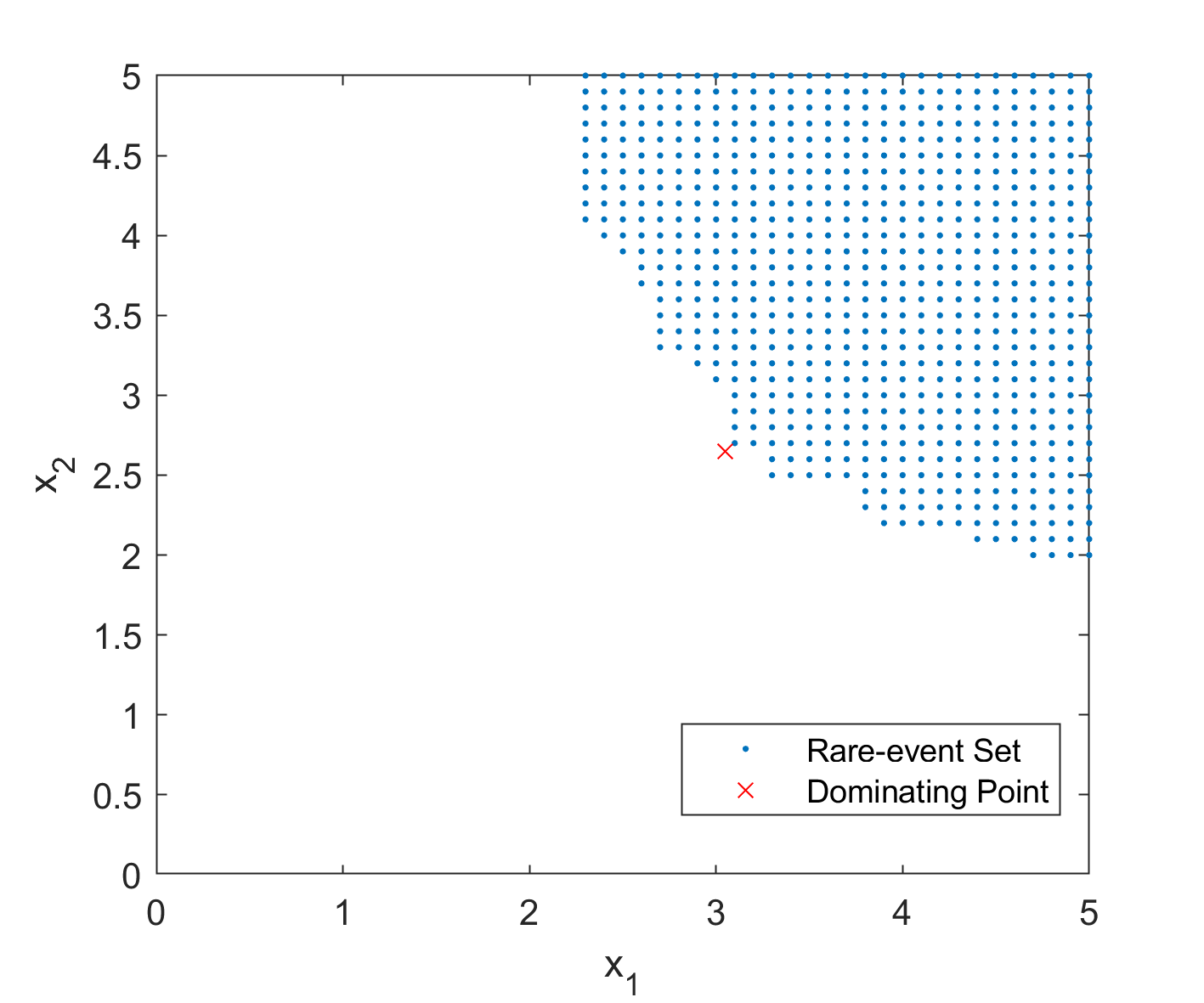}
								\caption{Rare-event set and dominating points for the random forest (case 1).}
								\label{fig:exp1_set_rf}
							\end{minipage}
                            \begin{minipage}[b]{0.4\textwidth}
								\centering
                            \includegraphics[width=\linewidth]{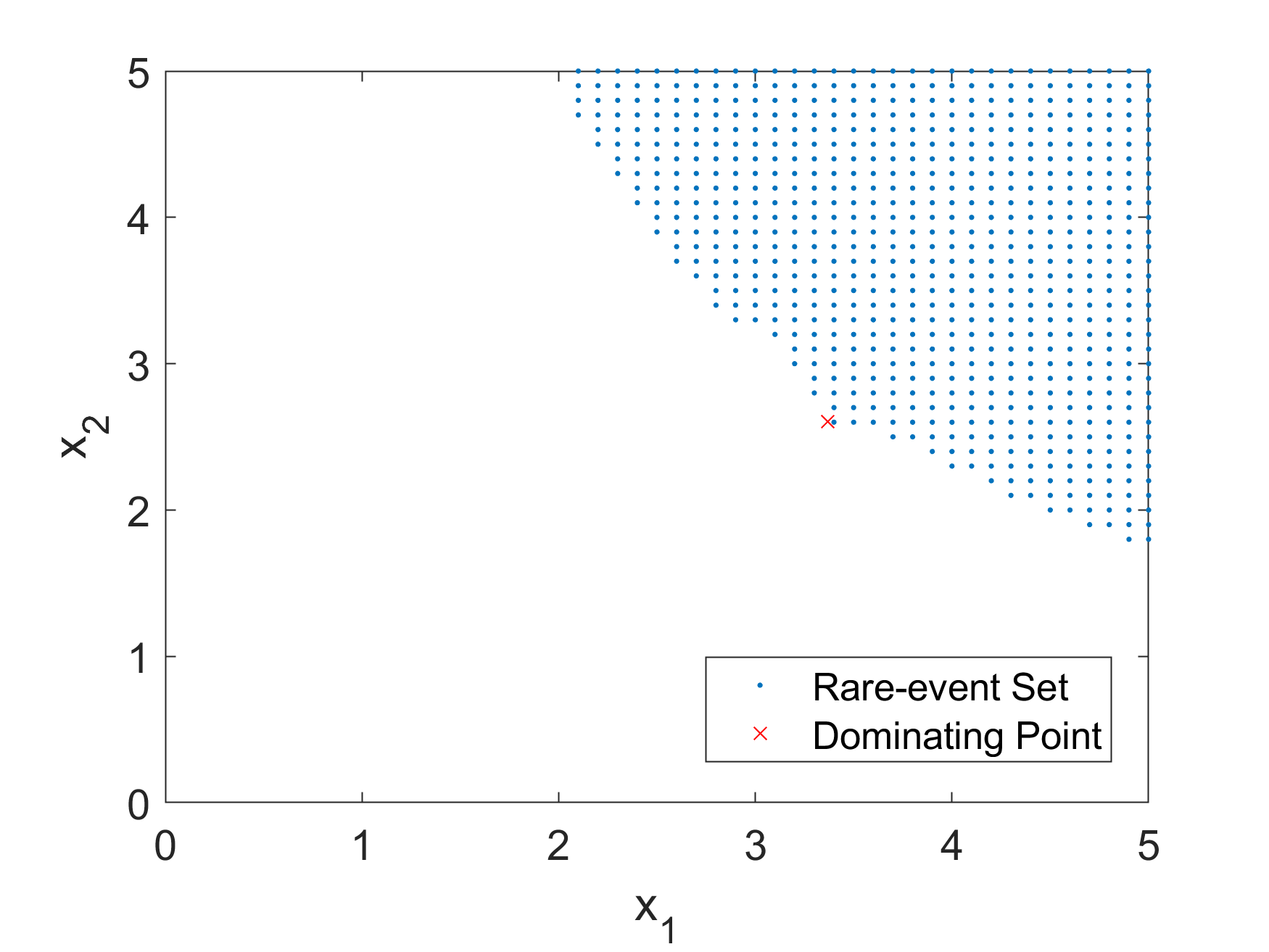}
	\caption{Rare-event set and dominating points for the neural network (case 1).}
	\label{fig:exp1_set_nn}
							\end{minipage}
			\end{figure}

\begin{figure}[t]
							\centering
							
							\begin{minipage}[b]{0.4\textwidth}
								\centering
								\includegraphics[width=\linewidth]{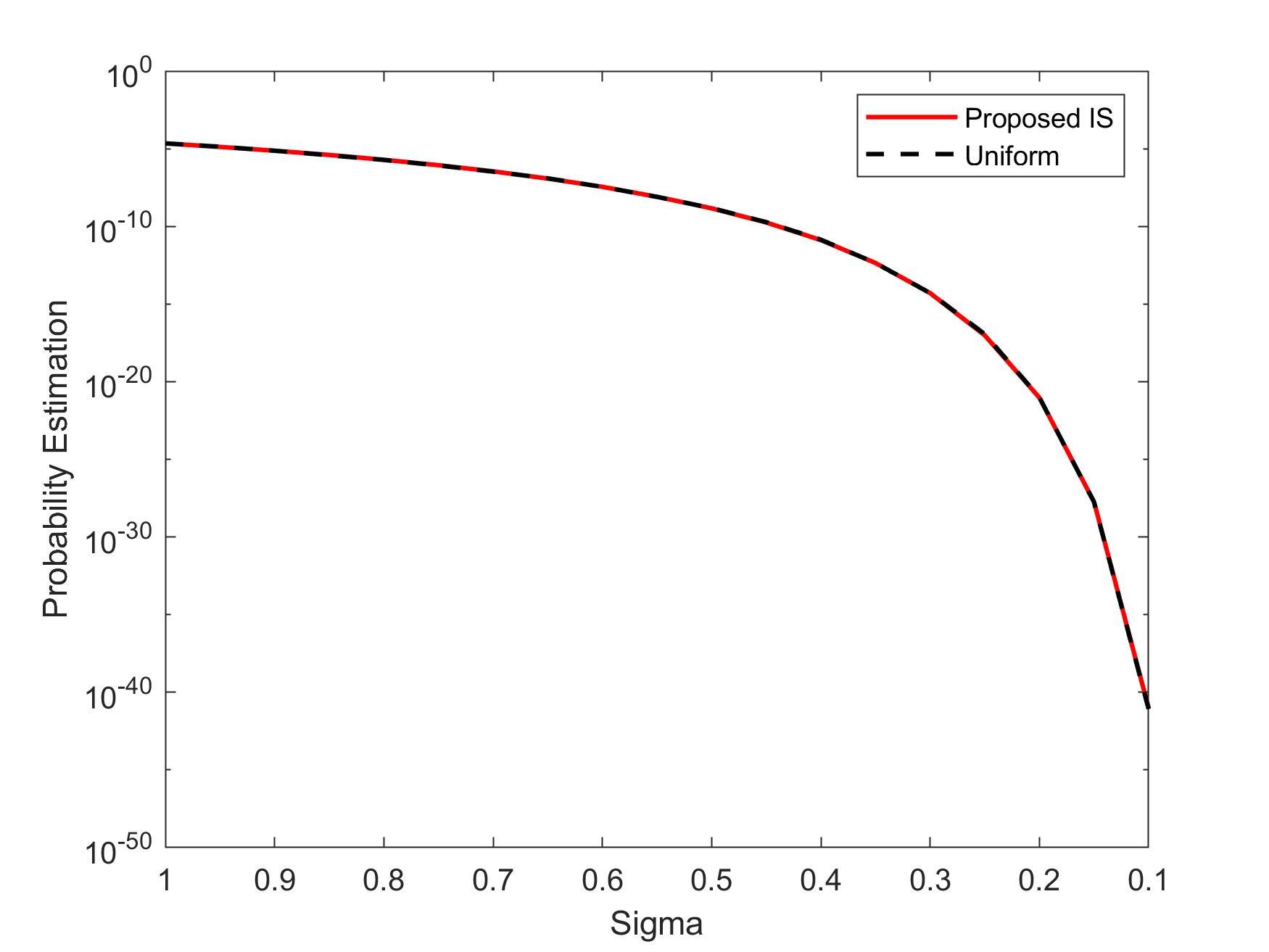}
								\caption{Probability estimation with different numbers of samples. Random forest, case 1.}
								\label{fig:exp1_mean_rf}
							\end{minipage}
                            \begin{minipage}[b]{0.4\textwidth}
								\centering
                            \includegraphics[width=\linewidth]{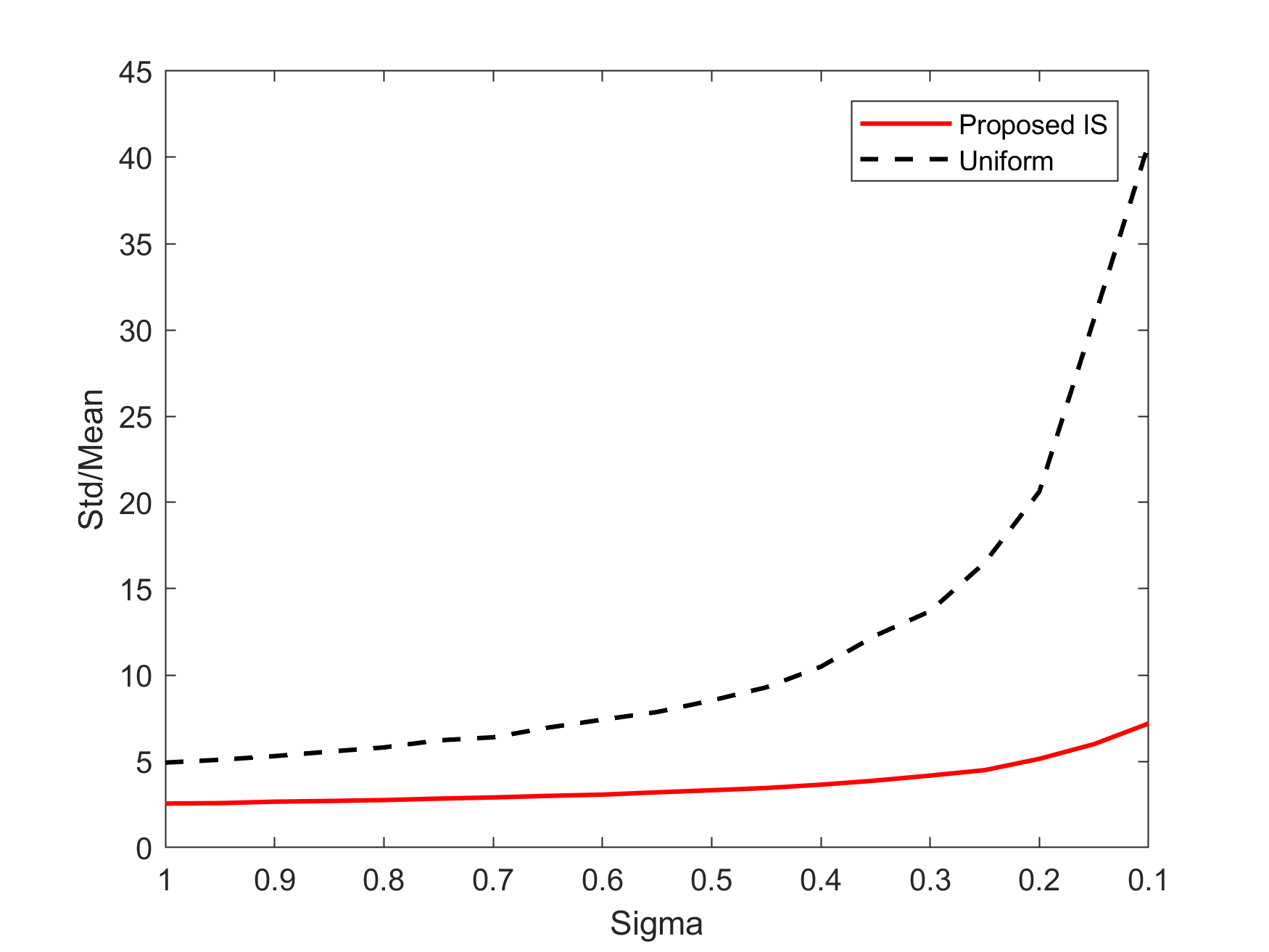}
	\caption{95\% confidence interval half-width with different numbers of samples. Random forest, case 1.}
	\label{fig:exp1_ci_rf}
							\end{minipage}
			\end{figure}
Next, we train a neural network predictor as $g(x)$. The neural network has 3 layers with 100 neurons in each of the 2 hidden layers, and all neurons are ReLU. The defined rare-event set is presented in Figure \ref{fig:exp1_set_nn}. We observe that the set is roughly convex and should have a single dominating point. We obtain the dominating point for the set at $(3.3676,2.6051)$. Figures \ref{fig:exp1_mean_nn} and \ref{fig:exp1_ci_nn} shows our results. Again we observe the proposed IS scheme provides smaller relative errors in all cases and the advantage increases with the rarity level (the relative error increases from 2.5 to 10 for the proposed IS and 5 to 55 for the uniform IS).

\begin{figure}[t]
							\centering
							
							\begin{minipage}[b]{0.4\textwidth}
								\centering
								\includegraphics[width=\linewidth]{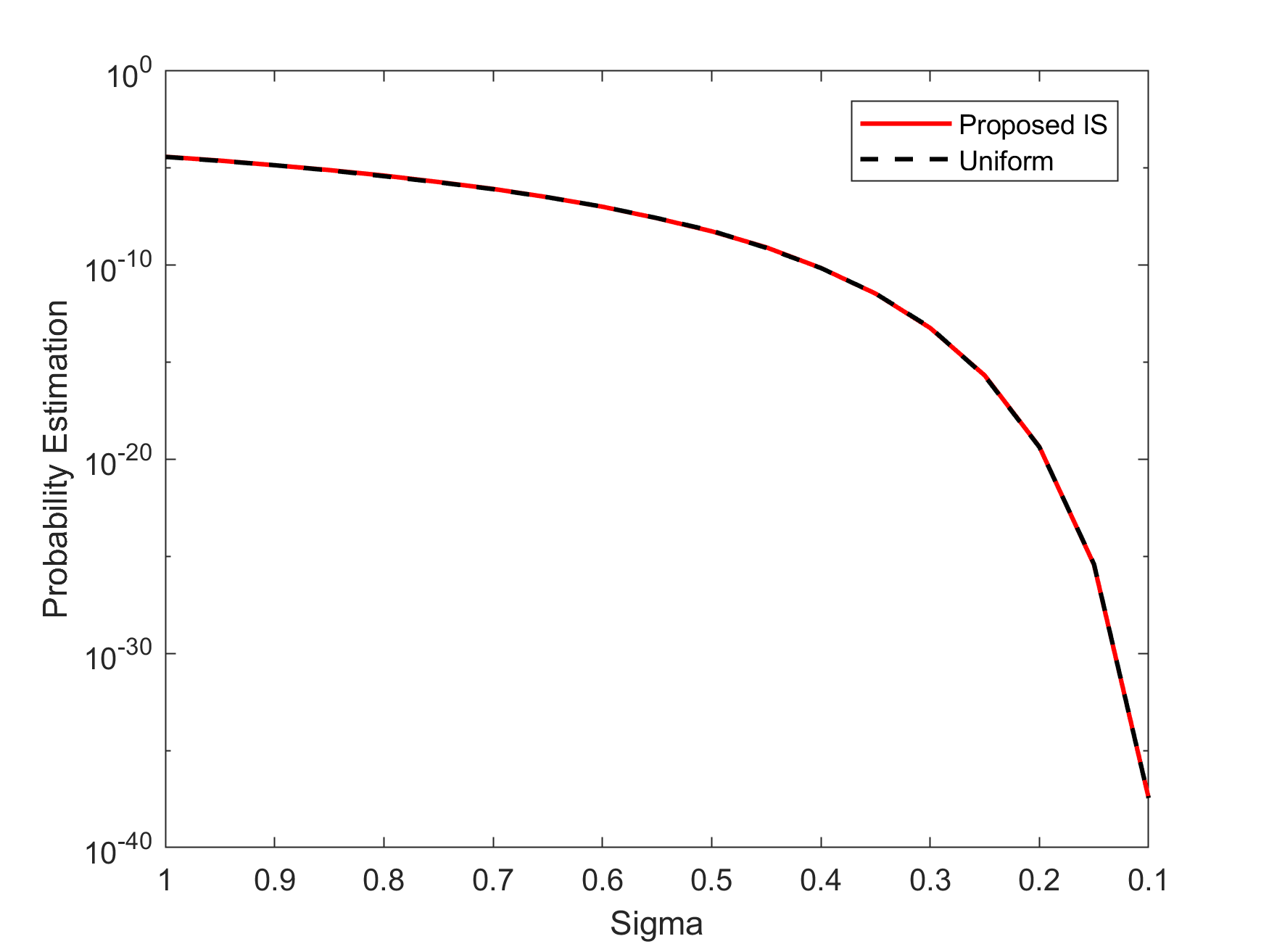}
								\caption{Probability estimation with different numbers of samples. Neural network, case 1.}
								\label{fig:exp1_mean_nn}
							\end{minipage}
                            \begin{minipage}[b]{0.4\textwidth}
								\centering
                            \includegraphics[width=\linewidth]{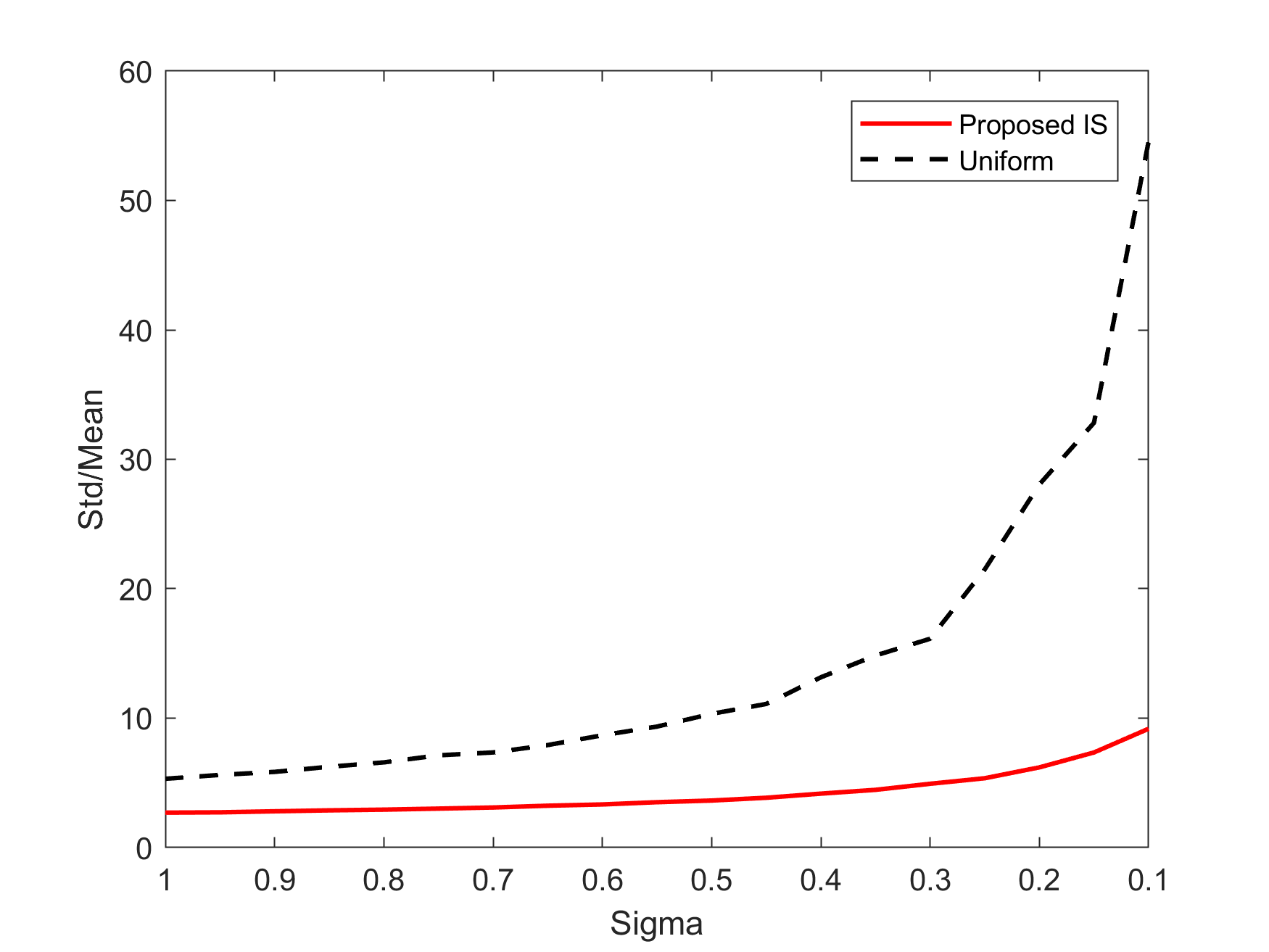}
	\caption{95\% confidence interval half-width with different numbers of samples. Neural network, case 1.}
	\label{fig:exp1_ci_nn}
							\end{minipage}
			\end{figure}

Next, we consider true output values generated according to the function \begin{equation} \label{eq:two_peak_function}
y(x)=10 \times e^{-\left( \frac{x_1-5}{3} \right) ^2 -\left( \frac{x_2-5}{4} \right) ^2} + 10 \times e^{- {x_1}  ^2 -\left( {x_2-4.5} \right) ^2}. 
\end{equation} Again we use a uniform grid over $[0,5]^2$ with a mesh of 0.1 on each coordinate to train the predictors. The random forest ensembles three regression trees with around 600 nodes and the neural network with 2 hidden layers, 100 neurons in the first hidden layer and 50 neurons in the second hidden layer. All neurons in the neural network are ReLU. We set $\gamma =8$. The shapes of the rare-event sets are shown in Figures \ref{fig:exp2_set_rf} and \ref{fig:exp2_set_nn}. We observe that the set now consists of two disjoint regions and therefore we expect to obtain multiple dominating points. Using Algorithm \ref{algo:main}, we obtain two dominating points in each case: $(0,4.15)$ and $(3.75,3.55)$  for the random forest model;  $(0.113,	4.162)$ and $(4.187,3.587)$ for the neural network model. We use these dominating points to construct a mixture distribution, as discussed in Section \ref{sec:problem_setting}, as the IS distribution. Again we vary $\sigma^2$ to obtain problems with different rarities and use 50,000 samples for each case.

\begin{figure}[t]
							\centering
							
							\begin{minipage}[b]{0.4\textwidth}
								\centering
								\includegraphics[width=\linewidth]{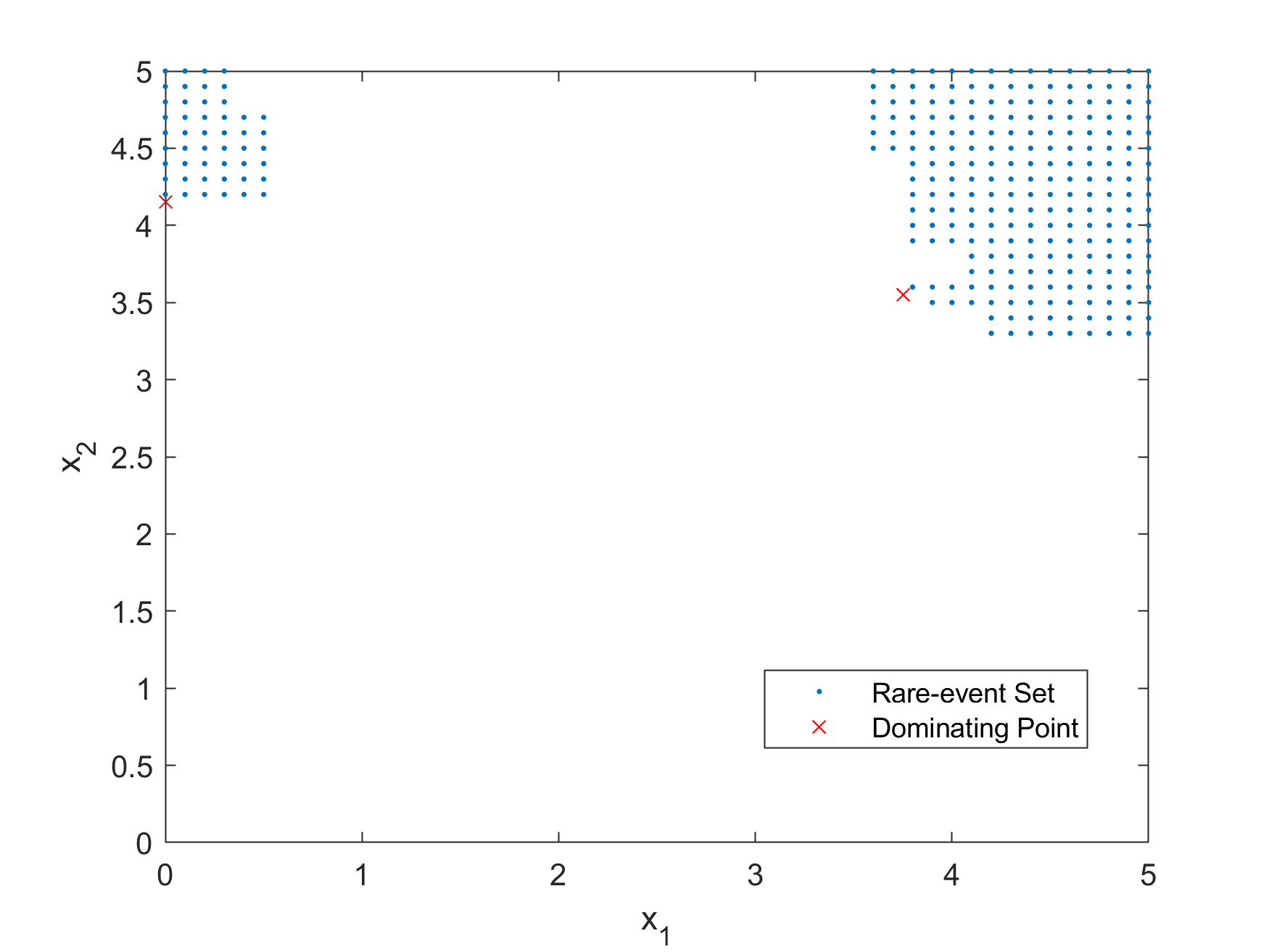}
								\caption{Rare-event set and dominating point for the random forest (case 2).}
								\label{fig:exp2_set_rf}
							\end{minipage}
                            \begin{minipage}[b]{0.4\textwidth}
								\centering
                            \includegraphics[width=\linewidth]{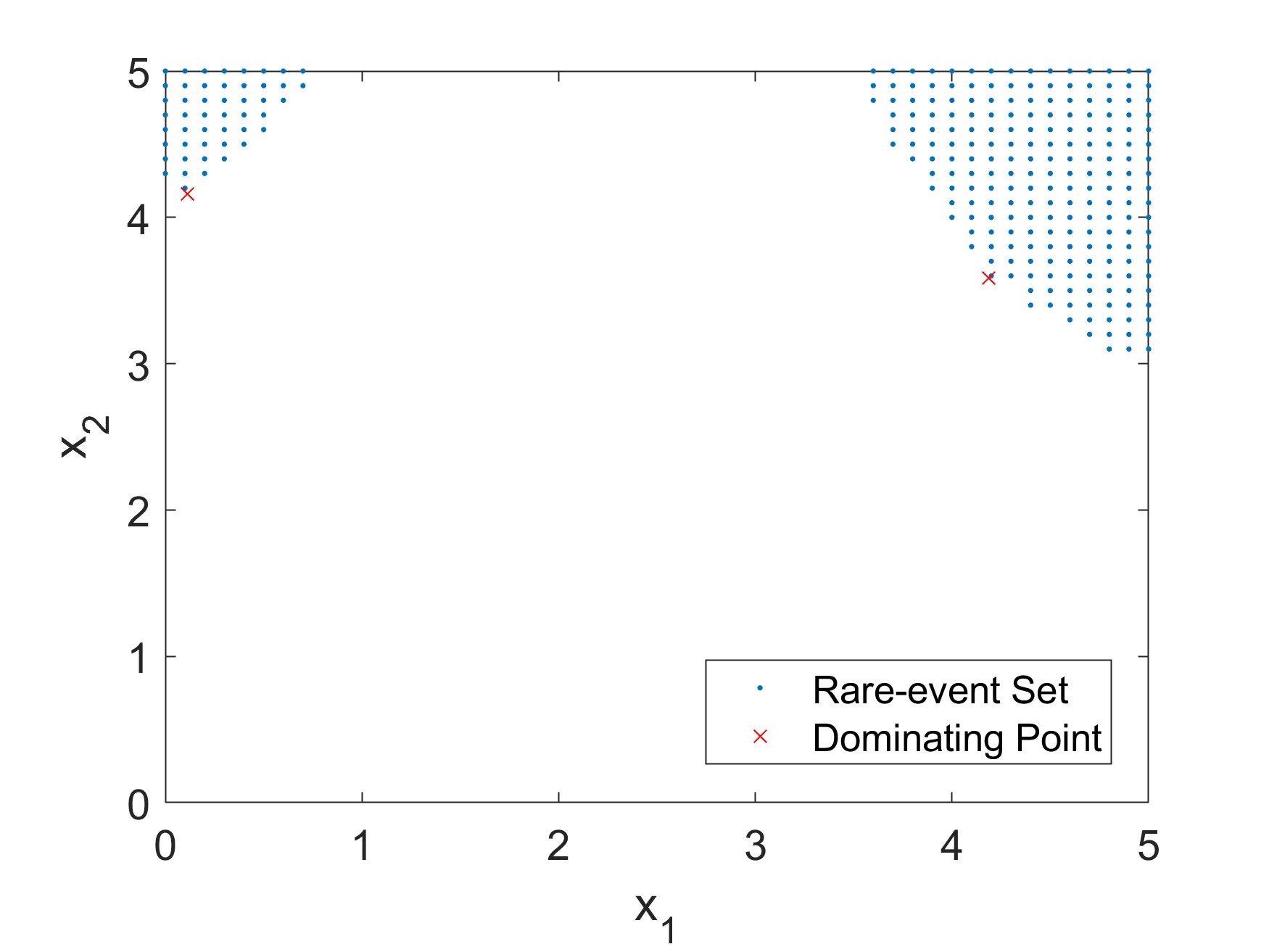}
	\caption{Rare-event set and dominating point for the neural network (case 2).}
	\label{fig:exp2_set_nn}
							\end{minipage}
			\end{figure}

The experimental results for the random forest predictor are shown in Figures \ref{fig:exp2_mean_rf} and \ref{fig:exp2_ci_rf}, and the results for the neural network predictor are shown in Figures \ref{fig:exp2_mean_nn} and \ref{fig:exp2_ci_nn}. Similar to the previous problem, both IS schemes give similar estimates in all the cases, as observed in Figures \ref{fig:exp2_mean_rf} and \ref{fig:exp2_mean_nn}. The relative errors shown in Figures \ref{fig:exp2_ci_rf} and \ref{fig:exp2_ci_nn} illustrate that, as the probability of interest decreases, the relative error ratio between the uniform IS and the proposed IS increases from 2 to around 5-6. We can conclude that the proposed IS scheme again outperforms the uniform IS and is more preferable as the rarity increases.

\begin{figure}[t]
							\centering
							
							\begin{minipage}[b]{0.4\textwidth}
								\centering
								\includegraphics[width=\linewidth]{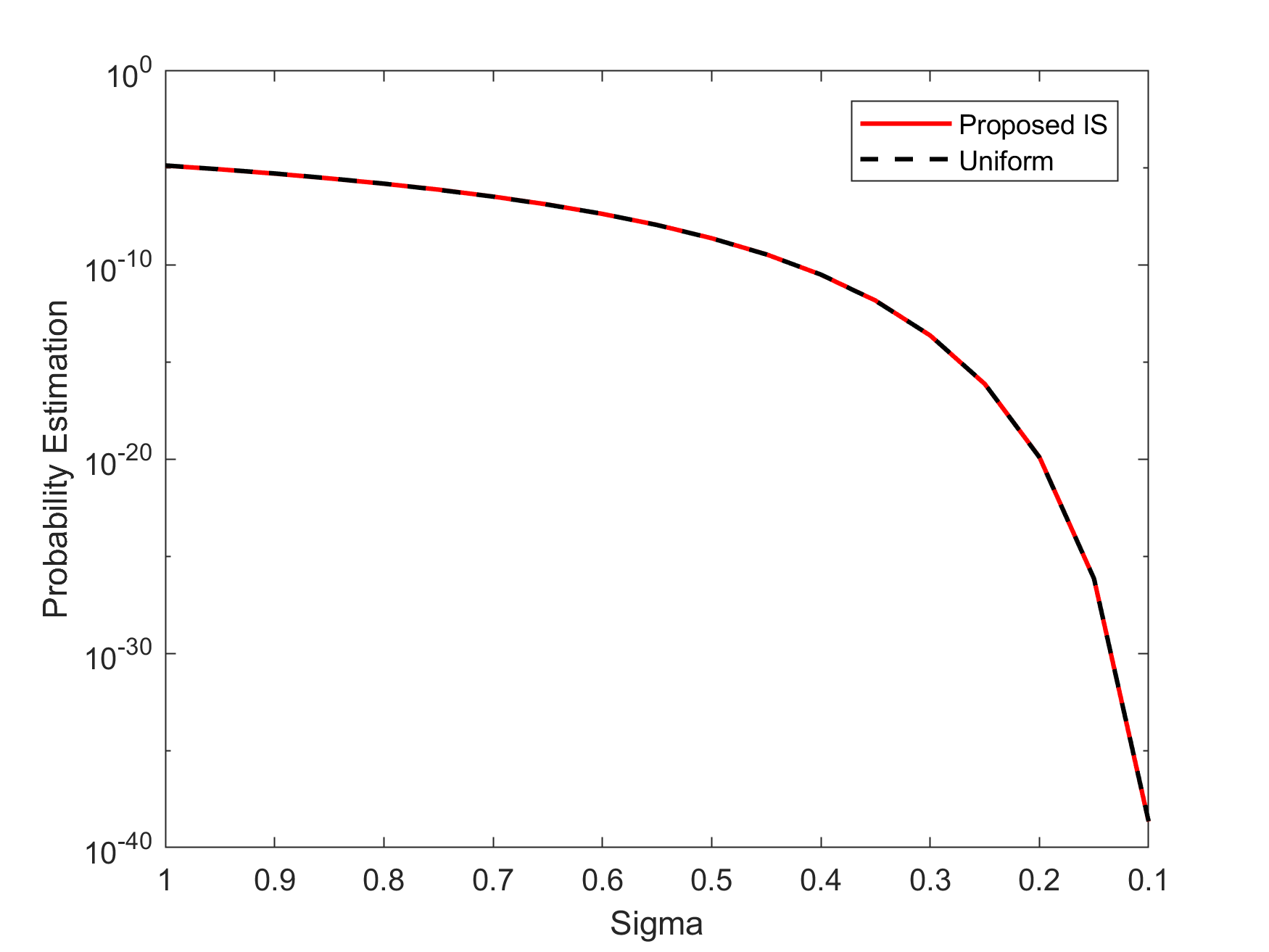}
								\caption{Probability estimation with different numbers of samples. Random forest, case 2.}
								\label{fig:exp2_mean_rf}
							\end{minipage}
                            \begin{minipage}[b]{0.4\textwidth}
								\centering
                            \includegraphics[width=\linewidth]{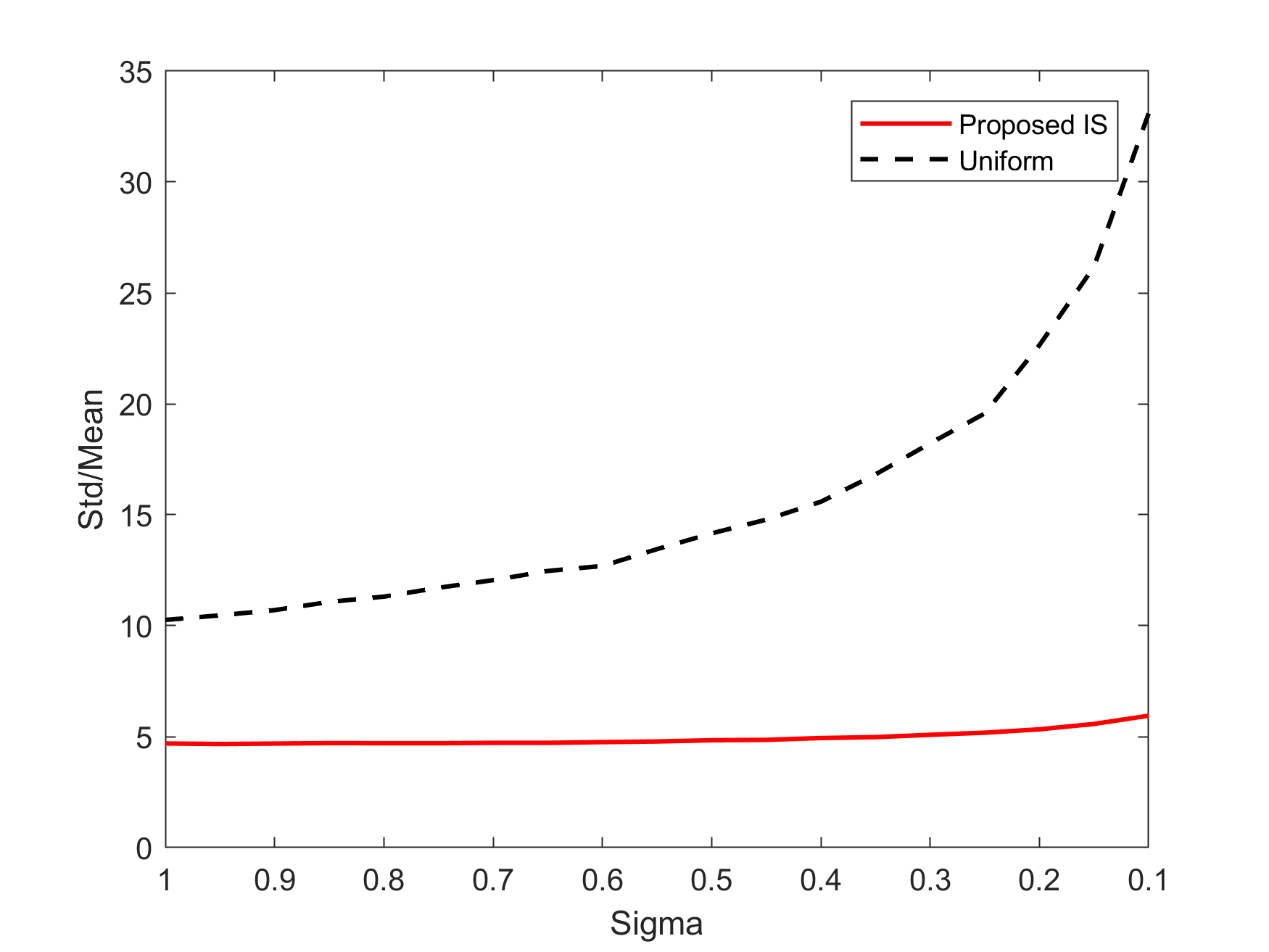}
	\caption{95\% confidence interval half-width with different numbers of samples. Random forest, case 2.}
	\label{fig:exp2_ci_rf}
							\end{minipage}
			\end{figure}			
			
\begin{figure}[t]
							\centering
							
							\begin{minipage}[b]{0.4\textwidth}
								\centering
								\includegraphics[width=\linewidth]{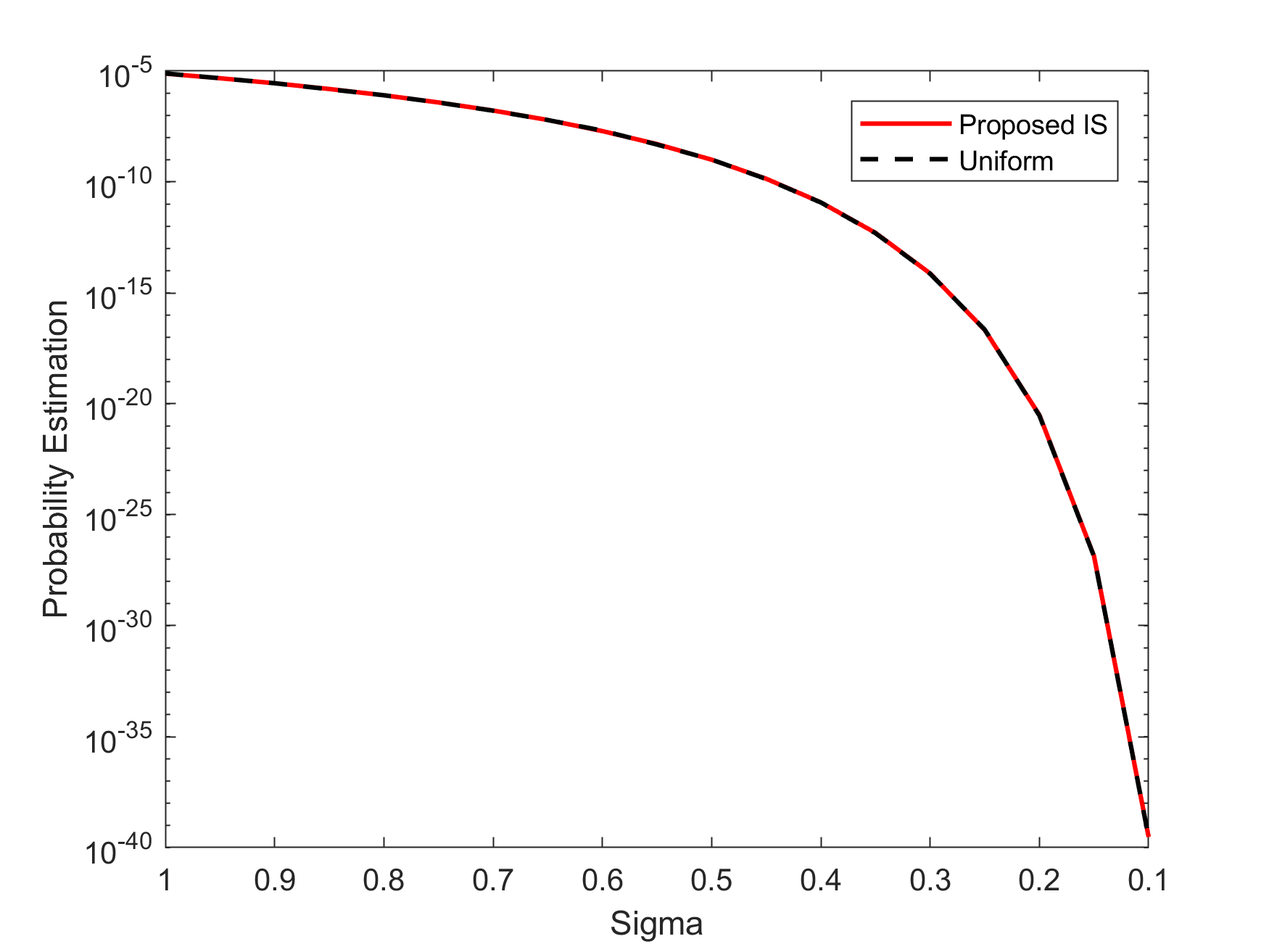}
								\caption{Probability estimation with different numbers of samples. Neural network, case 2.}
								\label{fig:exp2_mean_nn}
							\end{minipage}
                            \begin{minipage}[b]{0.4\textwidth}
								\centering
                            \includegraphics[width=\linewidth]{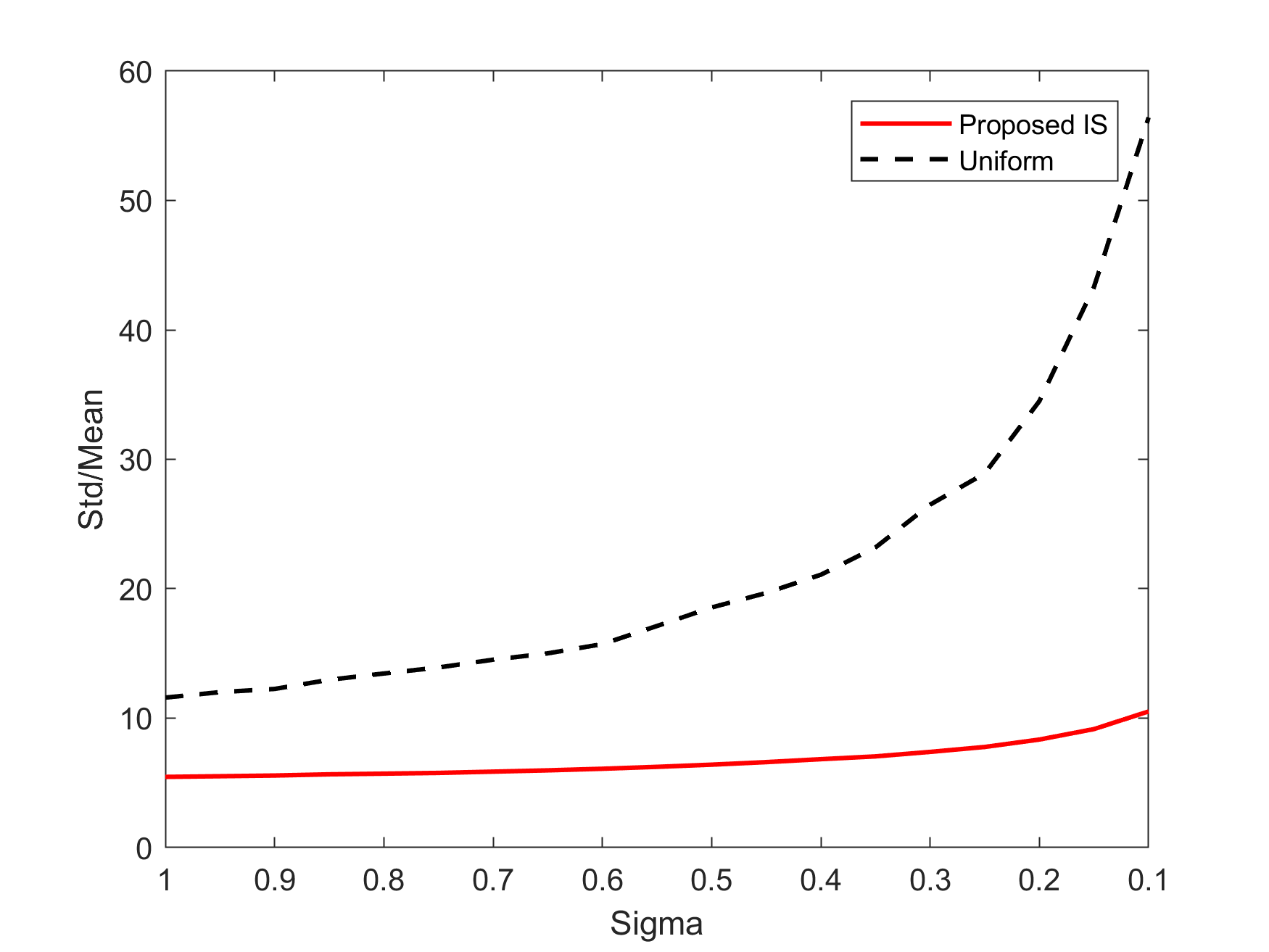}
	\caption{95\% confidence interval half-width with different numbers of samples. Neural network, case 2.}
	\label{fig:exp2_ci_nn}
							\end{minipage}
			\end{figure}

\subsection{MAGIC Gamma Telescope Data Set}
\label{sec:magic}

We study a rare-event probability estimation problem from a realistic classification task. The classification problem uses the MAGIC Gamma Telescope data set in the UCI Machine Learning Repository \cite{asuncion2007uci}. The problem is to classify images of electromagnetic showers collected by a ground-based atmospheric Cherenkov gamma telescope. The features of the data are 10-dimensional characteristic parameters of the images and the data set contains 19020 data points in total. Studies \cite{bock2004methods,savicky2004experimental,dvovrak2007softening} use machine learning predictors to discriminate images caused by a ``signal'' (primary gammas) from those initiated by the ``background'' (cosmic rays in the upper atmosphere).

To train the predictors, we allocate 15,000 data points as the training set and use the remaining 4,020 data points as the testing set. We train a random forest that ensembles 10 random trees to achieve 85.6\% testing set accuracy. For neural network, we use 2 hidden layers with 20 neurons and achieved 87\% testing set accuracy. 

The rare-event probability of interest is the statistical robustness metric (Example \ref{example:robust_metric}) of the two trained predictors. Specifically, we consider a testing data point, say with input $x$ and true label $y$, that is correctly predicted in both predictors (the predicted value $g(x)$ is consistent with $y$). Then we perturb the input $x$ with a Gaussian noise $\epsilon \sim N(0,I \sigma^2)$ and estimate the probability of $P(g(x+\epsilon )\neq y ) $, where we vary the value of $\sigma^2$ to construct rare-event with different rarities. Note that, as discussed in Example \ref{example:robust_metric}, $P(g(x+\epsilon )\neq y ) $ can be transformed into the format considered in this paper, i.e. $P(g(X) > \gamma ) $.

First, we implement Algorithm \ref{algo:main} to obtain dominating points for the rare-event sets $\{g(x+\epsilon )\neq y \}$ with random forest and neural network as $g(\cdot)$  respectively. We obtain 53 dominating points for the rare-event sets associated with the random forest predictor and 217 dominating points in the neural network case. The IS distributions are constructed using these dominating points. In both problems, $\sigma^2$ ranges from 0.03 to 0.1 and we use 50,000 samples to estimate each target rare-event probabilities. 

The experimental results for the random forest and neural network are presented in Figures \ref{fig:exp_mean_rf_magic} and \ref{fig:exp_mean_nn_magic} respectively. We observe that the estimates are very accurate in all experiments (with different rarities), which are indicated by the tight 95\% confidence intervals. These results show that our proposed IS scheme performs well with large numbers of dominating points and in relatively high-dimensional problems.

\begin{figure}[t]
							\centering
							
							\begin{minipage}[b]{0.4\textwidth}
								\centering
								\includegraphics[width=\linewidth]{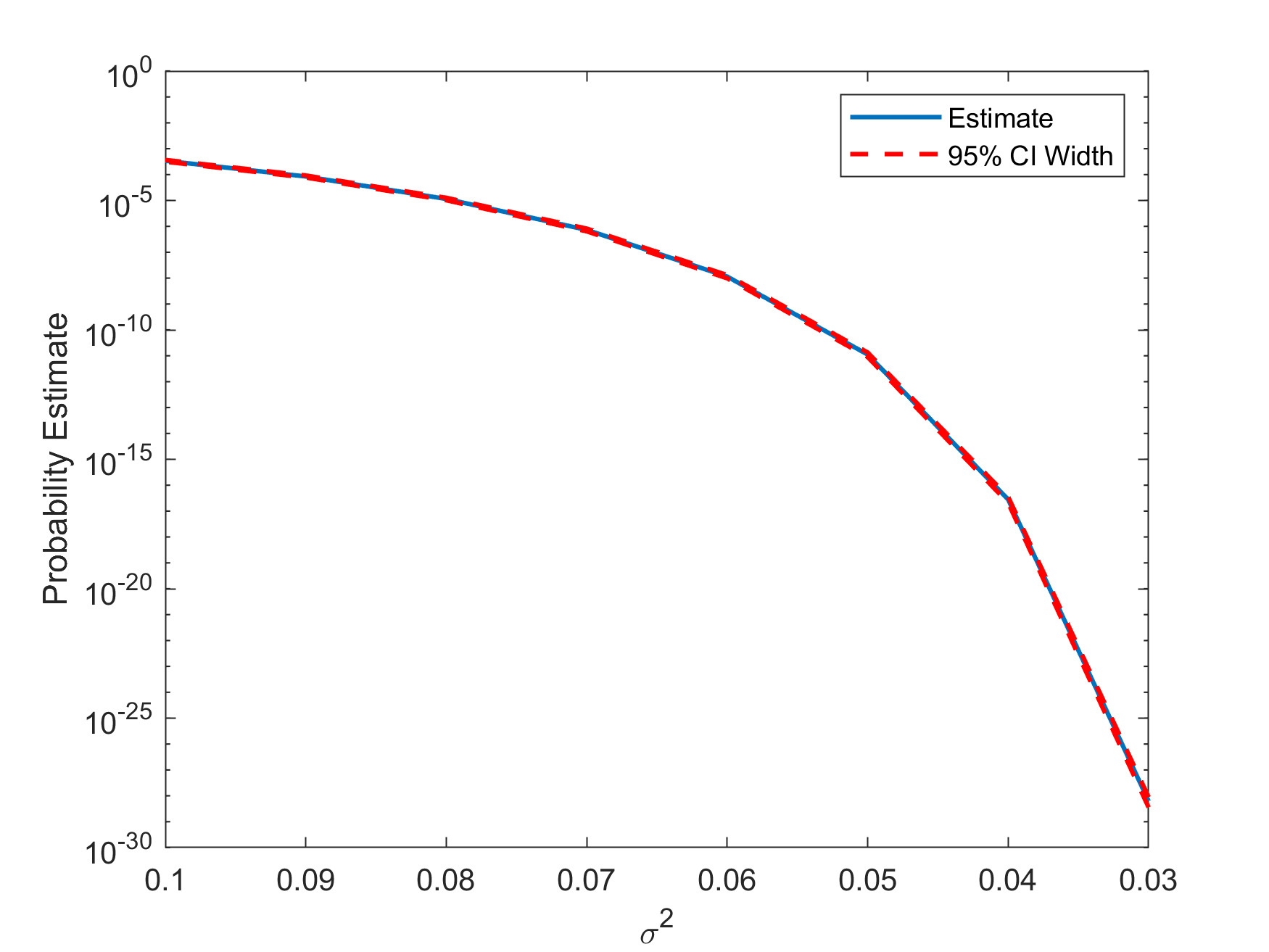}
								\caption{Probability estimation with different numbers of samples. Random forest, MAGIC.}
								\label{fig:exp_mean_rf_magic}
							\end{minipage}
                            \begin{minipage}[b]{0.4\textwidth}
								\centering
                            \includegraphics[width=\linewidth]{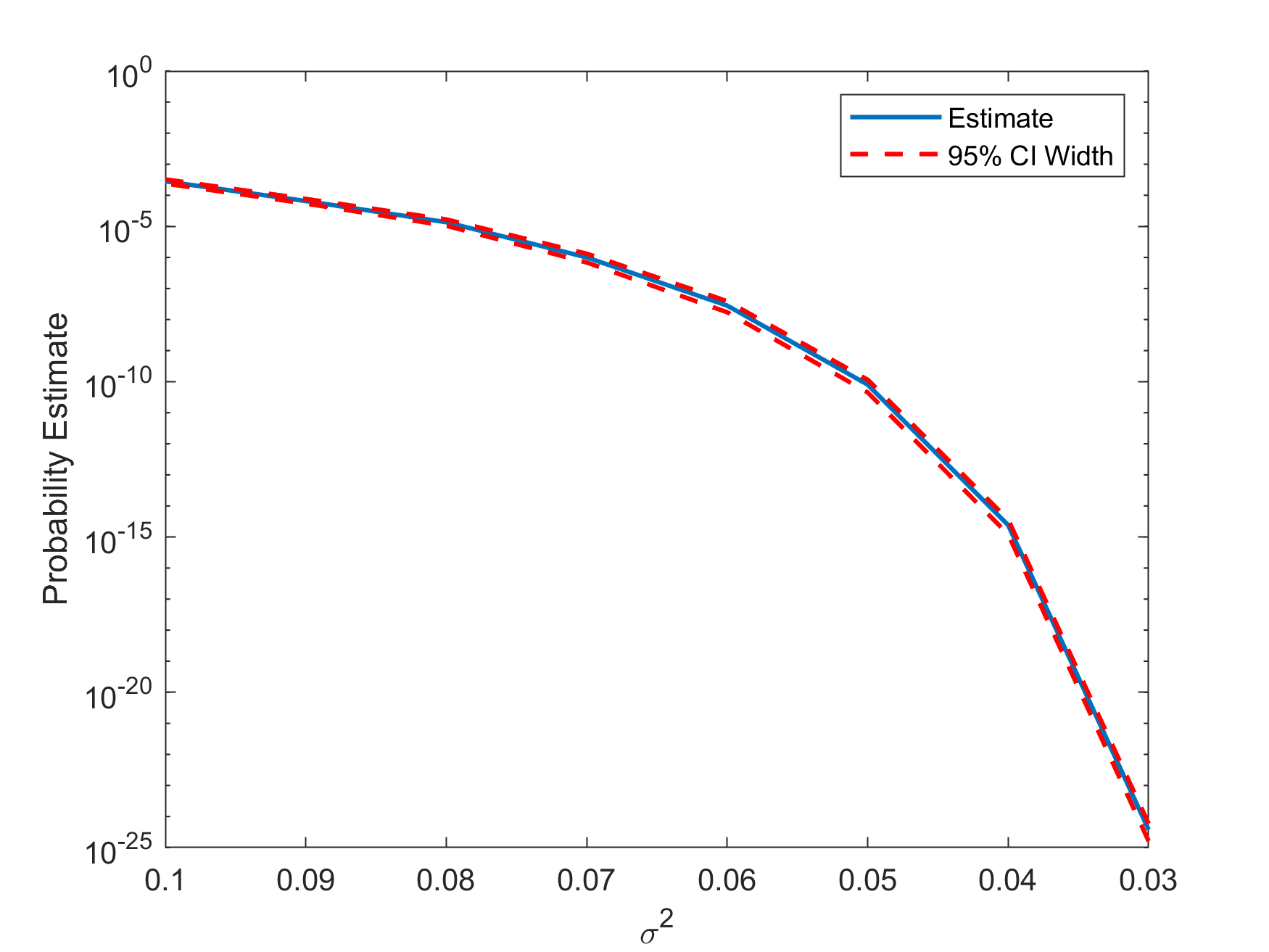}
	\caption{95\% confidence interval half-width with different numbers of samples. Random forest, MAGIC.}
	\label{fig:exp_mean_nn_magic}
							\end{minipage}
			\end{figure}

\section{Proof of Theorems}
\label{sec:guarantee}
Throughout this section, we write $f_1(\gamma)\sim f_2(\gamma)$ if $\lim_{\gamma\rightarrow\infty}f_1(\gamma)/f_2(\gamma)=1$. First of all, we adapt Theorem 4.1 in \cite{Hashorva2003gaussian} to obtain the following lemma. 

\begin{lemma}
Let $Y$ be a $d$-dimensional Gaussian random vector with zero mean and positie definite covariance matrix $\tilde{\Sigma}$. Suppose that $\tilde s=\tilde s(\gamma)$ is a $d$-dimensional vector such that as $\gamma\to\infty$, at least one of its components goes to $\infty$. Use $y^*$ to denote $\arg\min_{y\geq \tilde s}y^{T}\tilde{\Sigma}^{-1}y$. Then by Proposition 2.1 in \cite{Hashorva2003gaussian}, we know that there exists a unique set $I\subset\{1,\cdots,d\}$ such that 
\begin{subequations}
    \begin{align}
        &1\le |I|\le d; \label{Za}\\
        &y_I^*=\tilde s_I\ne \mathbf{0}_I;\label{Zb}\\
        &\text{If }J:=\{1,\dots,d\}\setminus I\ne\emptyset,\text{ then }y_J^*=-(\tilde{\Sigma}^{-1})_{JJ}^{-1}(\tilde{\Sigma}^{-1})_{JI}\tilde s_I\ge \tilde s_J\label{Zc}\\
        &\forall i\in I,e_i^{T}(\tilde{\Sigma}_{II})^{-1}\tilde s_I>0;\label{Zd}\\
        &\min_{y\ge t}y^{T}\tilde{\Sigma}^{-1}y=(y^*)^{T}\tilde{\Sigma}^{-1}y^*>0.\label{Ze}
    \end{align}
    \label{eqn:indexset}
\end{subequations}
We suppose that for sufficiently large $\gamma$, the set $I$ does not change with $\gamma$ and $\lim_{\gamma\to\infty}(\tilde s-y^*)_J=\tilde s_J^*$. Suppose further that $\forall i\in I$, $e_i^{T}(\tilde{\Sigma}_{II})^{-1}\tilde s_I$ either goes to $\infty$ or is a positive constant. Then as $\gamma\to\infty$, we have that 
	$$
	P(Y\geq \tilde s)\sim C\frac{\exp\{-(y^*)^{T}\tilde{\Sigma}^{-1}y^*/2\}}{\prod_{i\in I}e_i^{T}(\tilde{\Sigma}_{II})^{-1}\tilde s_I}
	$$
	where $C=C(\gamma)$ is a positive constant.
\label{thm:lemma}
\end{lemma}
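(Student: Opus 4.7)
The plan is to adapt the Laplace-type asymptotic of Hashorva 2003 by exploiting the dominating-point structure encoded in conditions (a)--(e). Since $y^*$ is the unique minimizer of $y^{T}\tilde{\Sigma}^{-1}y$ over the polyhedral cone $\{y\geq \tilde s\}$, the Gaussian density $\phi(\cdot;0,\tilde{\Sigma})$ is maximized at $y^*$ on this cone, and the tail probability concentrates in a shrinking neighborhood of $y^*$ whose shape is dictated by the active-constraint set $I$.

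First, I would change variables $z=y-y^*$ so that the integral is over $\{z_I\geq 0,\ z_J\geq (\tilde s-y^*)_J\}$: the $I$-block is tight by (b), while by (c) and the hypothesis $(\tilde s-y^*)_J\to \tilde s_J^*$ the $J$-block has strictly negative, bounded slack. Expanding $(z+y^*)^{T}\tilde{\Sigma}^{-1}(z+y^*)$, the KKT identities $(\tilde{\Sigma}^{-1}y^*)_J=0$ and $(\tilde{\Sigma}^{-1}y^*)_I=(\tilde{\Sigma}_{II})^{-1}\tilde s_I$ (which follow from (c) and (d) via Schur-complement algebra) reduce the cross term $z^{T}\tilde{\Sigma}^{-1}y^*$ to $z_I^{T}(\tilde{\Sigma}_{II})^{-1}\tilde s_I$, extracting the leading factor $\exp\{-(y^*)^{T}\tilde{\Sigma}^{-1}y^*/2\}$. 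Next, using the Schur decomposition
\[
z^{T}\tilde{\Sigma}^{-1}z \;=\; z_I^{T}(\tilde{\Sigma}_{II})^{-1}z_I \;+\; (z_J-Mz_I)^{T}(\tilde{\Sigma}_{JJ|I})^{-1}(z_J-Mz_I),
\]
with $M=-(\tilde{\Sigma}^{-1})_{JJ}^{-1}(\tilde{\Sigma}^{-1})_{JI}$ and $\tilde{\Sigma}_{JJ|I}$ the Schur complement, I would integrate $z_J$ first: since $\{z_J\geq (\tilde s-y^*)_J\}$ converges to $\{z_J\geq \tilde s_J^*\}$ and the conditional mean shift $Mz_I$ stays bounded on the effective range of $z_I$, dominated convergence shows the inner integral is bounded between positive constants uniformly in $z_I$.

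The remaining outer integral is of the form $\int_{z_I\geq 0}\exp\{-z_I^{T}(\tilde{\Sigma}_{II})^{-1}\tilde s_I\}$ times a bounded Gaussian-in-$z_I$ correction; rescaling $u_i=z_i\,(e_i^{T}(\tilde{\Sigma}_{II})^{-1}\tilde s_I)$ for $i\in I$ pulls out the target denominator $\prod_{i\in I}(e_i^{T}(\tilde{\Sigma}_{II})^{-1}\tilde s_I)^{-1}$ and leaves a Laplace-type integral that gets absorbed into $C(\gamma)$. The main obstacle is the mixed regime in which some coefficients $e_i^{T}(\tilde{\Sigma}_{II})^{-1}\tilde s_I$ diverge while others remain positive constants, a setting not directly covered by Hashorva 2003 which assumes all components of $\tilde s$ grow. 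For the divergent coordinates the rescaled integrals localize near $0$ and admit a clean dominated-convergence limit, whereas for the constant coordinates the $z_i$ integration does not localize but the Gaussian-in-$z_i$ part still yields a bounded contribution that can be folded into $C(\gamma)$. Verifying that the coupling through $Mz_I$ between the $z_I$ and $z_J$ integrations preserves boundedness of $C(\gamma)$ uniformly in $\gamma$ is the key technical point, and relies on the eventual constancy of $I$ in $\gamma$ together with the bounded slack of $(\tilde s-y^*)_J$ guaranteed by the hypothesis.
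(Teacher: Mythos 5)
Your proposal follows essentially the same route as the paper's proof: translate to the dominating point $y^*$, use the cross-term identity $(x+y^*)^{T}\tilde{\Sigma}^{-1}(x+y^*)=x^{T}\tilde{\Sigma}^{-1}x+2x_I^{T}(\tilde{\Sigma}_{II})^{-1}\tilde s_I+(y^*)^{T}\tilde{\Sigma}^{-1}y^*$, rescale the $I$-coordinates by $e_i^{T}(\tilde{\Sigma}_{II})^{-1}\tilde s_I$ to extract the product in the denominator, and apply dominated convergence (with the Gaussian-in-$x_J$ bound, which is your Schur-complement step) while splitting $I$ into the divergent and bounded-coefficient subsets. The argument is correct and matches the paper's in all essential respects.
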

\begin{proof}
	Given $x\in\mathbb{R}^d$, we define $\tilde{x}$ in the following way: $(\tilde{x})_i=(e_i^{T}(\tilde{\Sigma}_{II})^{-1}\tilde s_I)^{-1}x_i,\forall i\in I$; $(\tilde{x})_J=x_J$. Using $(3.4)$ in \cite{Hashorva2003gaussian}, we know that 
	$$
	(x+y^*)^{T}\tilde{\Sigma}^{-1}(x+y^*)=x^{T}\tilde{\Sigma}^{-1}x+2(x_I)^{T}(\tilde{\Sigma}_{II})^{-1}\tilde s_I+(y^*)^{T}\tilde{\Sigma}^{-1}y^*,
	$$
	and thus 
	$$
	\begin{aligned}
	\phi(\tilde{x}+y^*)&=(2\pi)^{-\frac{d}{2}}|\tilde{\Sigma}|^{-\frac12}\exp\{-\frac12\left[
	(\tilde{x})^{T}\tilde{\Sigma}^{-1}\tilde{x}+2(\tilde{x}_I)^{T}(\tilde{\Sigma}_{II})^{-1}\tilde s_I+(y^*)^{T}\tilde{\Sigma}^{-1}y^*\right]\}\\
	&=(2\pi)^{-\frac{d}{2}}|\tilde{\Sigma}|^{-\frac12}\exp\{-\frac12\left[
	(\tilde{x})^{T}\tilde{\Sigma}^{-1}\tilde{x}+2x_I^{T}\mathbf{1}_I+(y^*)^{T}\tilde{\Sigma}^{-1}y^*\right]\}
	\end{aligned}
	$$
	where $\phi$ is the density function of $N(\mathbf{0},\tilde{\Sigma})$. Then we get that 
	$$
	\begin{aligned}
	&P(Y\ge \tilde s)\\
	=&\int_{y\ge \tilde s}\phi(y)\mathrm{d}y\\
	=&\left(\prod_{i\in I}e_i^{T}(\tilde{\Sigma}_{II})^{-1}\tilde s_I\right)^{-1}\int_{x\ge \tilde{s}-y^*}\phi(\tilde{x}+y^*)\mathrm{d}x\\
	=&(2\pi)^{-\frac{d}{2}}|\tilde{\Sigma}|^{-\frac12}\left(\prod_{i\in I}e_i^{T}(\tilde{\Sigma}_{II})^{-1}\tilde s_I\right)^{-1}\exp\{-(y^*)^{T}\tilde{\Sigma}^{-1}y^*/2\}\int_{x\ge \tilde{s}-y^*}\exp\{-(\tilde{x})^{T}\tilde{\Sigma}^{-1}\tilde{x}/2-x_I^{T}\mathbf{1}_I\}\mathrm{d}x.
	\end{aligned}
	$$
	Apparent from the above, it suffices to show that $\int_{x\ge t-y^*}\exp\{-(\tilde{x})^{T}\tilde{\Sigma}^{-1}\tilde{x}/2-x_I^{T}\mathbf{1}_I\}\mathrm{d}x$ converges to a positive constant as $\gamma\to\infty$. Indeed, using $(3.6)$ in \cite{Hashorva2003gaussian} we know that $(\tilde{x})^{T}\tilde{\Sigma}^{-1}\tilde{x}\ge x_J^{T}(\tilde{\Sigma}_{JJ})^{-1}x_J$ and thus
	$$
	\exp\{-(\tilde{x})^{T}\tilde{\Sigma}^{-1}\tilde{x}/2-x_I^{T}\mathbf{1}_I\}\le \exp\{-x_J^{T}(\tilde{\Sigma}_{JJ})^{-1}x_J/2-x_I^{T}\mathbf{1}_I\}.
	$$
	Moreover, we have that 
	$$
	\int_{x_I\ge\mathbf{0}_I}\exp\{-x_J^{T}(\tilde{\Sigma}_{JJ})^{-1}x_J/2-x_I^{T}\mathbf{1}_I\}\mathrm{d}x=\int_{\mathbb{R}^{|J|}}\exp\{-x_J^{T}(\tilde{\Sigma}_{JJ})^{-1}x_J/2\}\mathrm{d}x_J<\infty.
	$$
	We partition $I$ into $I_1=\{i\in I:e_i^{T}(\tilde{\Sigma}_{II})^{-1}\tilde s_I\to\infty\}$ and $I_2=\{i\in I:e_i^{T}(\tilde{\Sigma}_{II})^{-1}\tilde s_I\text{ is a positive constant}\}$. Then we get that 
	$$
	\begin{aligned}
	&\lim_{\gamma\to\infty}\exp\{-(\tilde{x})^{T}\tilde{\Sigma}^{-1}\tilde{x}/2-x_I^{T}\mathbf{1}_I\}\\
	=&\exp\left\{-\frac12\left[(\tilde{x}_{I_2})^{T}(\tilde{\Sigma}^{-1})_{I_2I_2}\tilde{x}_{I_2}+(\tilde{x}_{I_2})^{T}(\tilde{\Sigma}^{-1})_{I_2J}x_J+x_J^{T}(\tilde{\Sigma}^{-1})_{JI_2}\tilde{x}_{I_2}+x_J^{T}(\tilde{\Sigma}^{-1})_{JJ}x_J\right]-x_I^{T}\mathbf{1}_I\right\}.
	\end{aligned}
	$$ 
	We know that the above limit does not depend on $\gamma$. By applying the dominated convergence theorem, we get that 
	$$
	\begin{aligned}
	&\lim_{\gamma\to\infty}\int_{x\ge \tilde{s}-y^*}\exp\{-(\tilde{x})^{T}\tilde{\Sigma}^{-1}\tilde{x}/2-x_I^{T}\mathbf{1}_I\}\mathrm{d}x\\
	=&\int\int_{x_I\ge\mathbf{0}_I,x_J\ge \tilde s_J^*}\\
	&\exp\left\{-\frac12\left[(\tilde{x}_{I_2})^{T}(\tilde{\Sigma}^{-1})_{I_2I_2}\tilde{x}_{I_2}+(\tilde{x}_{I_2})^{T}(\tilde{\Sigma}^{-1})_{I_2J}x_J+x_J^{T}(\tilde{\Sigma}^{-1})_{JI_2}\tilde{x}_{I_2}+x_J^{T}(\tilde{\Sigma}^{-1})_{JJ}x_J\right]-x_I^{T}\mathbf{1}_I\right\}\mathrm{d}x_I\mathrm{d}x_J\\
	=&\int\int_{x_{I_2}\ge\mathbf{0}_{I_2},x_J\ge \tilde s_J^*}\\
	&\exp\left\{-\frac12\left[(\tilde{x}_{I_2})^{T}(\tilde{\Sigma}^{-1})_{I_2I_2}\tilde{x}_{I_2}+(\tilde{x}_{I_2})^{T}(\tilde{\Sigma}^{-1})_{I_2J}x_J+x_J^{T}(\tilde{\Sigma}^{-1})_{JI_2}\tilde{x}_{I_2}+x_J^{T}(\tilde{\Sigma}^{-1})_{JJ}x_J\right]-x_{I_2}^{T}\mathbf{1}_{I_2}\right\}\mathrm{d}x_{I_2}\mathrm{d}x_J.
	\end{aligned}
	$$
	This shows that $\int_{x\ge t-y^*}\exp\{-(\tilde{x})^{T}\tilde{\Sigma}^{-1}\tilde{x}/2-x_I^{T}\mathbf{1}_I\}\mathrm{d}x$ converges to a positive constant as $\gamma\to\infty$, and hence we have proved the theorem.
\end{proof}
\par 

\begin{proof}[Proof of Theorem \ref{thm:efficiency}]
Suppose that $g(x)=g_i(x)$ for $h_{ij}(x)\geq 0,j=1,\dots,m_i$, $i=1,\dots,r'$ where $g_i$'s and $h_{ij}$'s are all affine functions. Then we can split $\{x:g(x)\geq \gamma\}$ into $\tilde{\mathcal{R}}_1,\dots,\tilde{\mathcal{R}}_{r'}$ where
$\tilde{\mathcal{R}}_i=\{x:g_i(x)\geq\gamma,h_{ij}(x)\geq 0,j=1,\dots,m_i\}$. 
We denote $\tilde{a}_i=\arg\min_x\{(x-\mu)^T\Sigma^{-1}(x-\mu):x\in\tilde{\mathcal{R}}_i\}$. 
\par 
To justify the asymptotic optimality of the proposed IS estimator \eqref{eq:proposed_IS}, we need to show that 
$$
\frac{\tilde{E}[Z^2]}{\tilde{E}[Z]^2}=\frac{E[I(g(X)\geq\gamma)L(X)]}{P(g(X)\geq\gamma)^2}=\frac{\sum_{i=1}^r E[I(X\in \mathcal{R}_i)L(X)]}{\left(\sum_{i=1}^{r'} P(X\in \tilde{\mathcal{R}}_i)\right)^2}
$$
is at most polynomially growing in $\gamma$. 
\par 
To simplify the notations, we consider the polyhedron $P_1:=\{x\in\R^d:Ax\geq t\}$ where $A\in \R^{m\times d},t\in \R^m$ and in particular, $t_1=\gamma+c$ for some constant $c\in\R$ and $t_2,\dots,t_m$ are all constants in $\R$. Naturally, we assume that $P(X\in P_1)>0$ where $X\sim N(\mu,\Sigma)$ for any $\gamma\in\R$. We define $x^*=\arg\min\{(x-\mu)^T\Sigma^{-1}(x-\mu):x\in P_1\}$. Note that for sufficiently large $\gamma$, each component of $x^*$ is an affine function of $\gamma$, so $(x^*-\mu)^T\Sigma^{-1}(x^*-\mu)$ is a quadratic polynomial of $\gamma$. We will prove that $-\log P(X\in P_1)\sim (x^*-\mu)^T\Sigma^{-1}(x^*-\mu)/2$ as $\gamma\rightarrow\infty$.
\par 
We use $A_i$ to denote the $i$-th row vector of $A$. Suppose that $A_{i_j}^{T}x\geq t_{i_j},j=1,\dots,m'$ are all the linearly independent active constraints at $x^*$. If $m'<d$, then we can add redundant constraints in the form of $x_{k_l}\ge -\infty, l=1,\cdots,d-m'$ such that we get $d$ linearly independent constraints now.
More specifically, let 
$$
B=\left(
\begin{matrix}
A_{i_1}^{T}\\
\vdots\\
A_{i_{m'}}^{T}\\
e_{k_1}^{T}\\
\vdots\\
e_{k_{d-m'}}^{T}
\end{matrix}\right),
s=\left(
\begin{matrix}
t_{i_1}\\
\vdots\\
t_{i_{m'}}\\
-\infty\\
\vdots\\
-\infty
\end{matrix}\right).
$$
By the definition, we get that $B$ is invertible. We know that for sufficiently large $\gamma$, the active constraints at $x^*$ do not change as $\gamma$ increases. Thus, in our following discussions, we assume that $B$ and $s$ does not change with $\gamma$. Also, it is clear that the constraint $A_1^{T}\ge t_1=\gamma+c$ must be active at $x^*$, i.e. $i_1=1$. Since $P_2:=\{x:Bx\ge s\}$ is obtained by removing constraints from $P_1$, we have that $P_1\subset P_2$. Our first step is to develop the asymptotic result of $P(X\in P_2)$, where we directly apply Lemma \ref{thm:lemma}.
\par 
We know that $Y:=B(X-\mu)\sim N(0,\tilde{\Sigma})$ where $\tilde \Sigma=B\Sigma B^{T}$ is positive definite. We denote $y^*=\arg\min\{y^{T}\tilde \Sigma^{-1}y:y\geq \tilde s\}$ where $\tilde s = s-B\mu$. 
Recall that under our settings, $s_1=\gamma+c$ for some constant $c\in\R$ so $\tilde s_1\rightarrow\infty$ as $\gamma\rightarrow\infty$. We still use the symbol $I$ to denote the set that satisfies \eqref{eqn:indexset}. Similar to our previous argument, $I$ does not change for sufficiently large $\gamma$. Also the limit $\lim_{\gamma\to\infty}(\tilde s-y^*)_J$ exists. For any $i\in I$, we know that $e_i^{T}(\tilde \Sigma_{II})^{-1}\tilde s_I>0$ and it is an affine function of $\gamma$, and thus either it goes to $\infty$ or it is a positive constant as $\gamma\rightarrow\infty$. In conclusion, all the assumptions of Lemma \ref{thm:lemma} hold in this case. Therefore, we get that 
\begin{equation}
P(X\in P_2)\sim C\frac{\exp\{-(y^*)^{T}\tilde \Sigma^{-1}y^*/2\}}{\prod_{i\in I}e_i^{T}(\tilde \Sigma_{II})^{-1}\tilde s_I}
\label{eqn:asyptoticP2}
\end{equation}
for some constant $C$. It is easy to verify that $(y^*)^{T}\tilde \Sigma^{-1}y^*=(x^*-\mu)^T\Sigma^{-1}(x^*-\mu)$, and hence $-\log P(X\in P_2)\sim (x^*-\mu)^T\Sigma^{-1}(x^*-\mu)/2$.
\par 

Clearly $P(X\in P_2)$ gives an upper bound for $P(X\in P_1)$. Now we develop a lower bound using similar techniques. \par 

We denote $x^{**}=\arg\min_x\{(x-\mu)^T\Sigma^{-1}(x-\mu):x\in\overline{P_2\setminus P_1}\}$ and $x^{***}=\arg\min_x\{(x-x^*)^T\Sigma^{-1}(x-x^*):x\in\overline{P_2\setminus P_1}\}$. Clearly each component of $x^*$ and $x^{***}$ is affine in $\gamma$ when $\gamma$ is sufficiently large, and hence $(x^{***}-x^*)^T\Sigma^{-1}(x^{***}-x^*)\geq 0$ is polynomial in $\gamma$. Thus we know that $(x^{***}-x^*)^T\Sigma^{-1}(x^{***}-x^*)$ either goes to infinity or stays a nonnegative constant as $\gamma\rightarrow\infty$. However, if $(x^{***}-x^*)^T\Sigma^{-1}(x^{***}-x^*)=0$ for sufficiently large $\gamma$, then we have that $x^{***}=x^*$, and hence $x^*\in \overline{P_2\setminus P_1}$, which contradicts the easily verified fact that $(x^{**}-\mu)^T\Sigma^{-1}(x^{**}-\mu)>(x^*-\mu)^T\Sigma^{-1}(x^*-\mu)$. Therefore, there exists a constant $0<\varepsilon<1$ such that $\{x:(x-x^*)^T\Sigma^{-1}(x-x^*)\leq \varepsilon^2\}\cap P_1=\{x:(x-x^*)^T\Sigma^{-1}(x-x^*)\leq \varepsilon^2\}\cap P_2$ for sufficiently large $\gamma$. Correspondingly, there exists $\varepsilon'>0$ such that $\{x:\Vert x\Vert_{\infty}\leq\varepsilon'\}\subseteq\{x:x^T\Sigma^{-1}x\leq \varepsilon^2\}$. 

Still we define $Y=B(X-\mu)\sim N(0,\tilde{\Sigma})$. Then we get that 
$$
\begin{aligned}
P(X\in P_1)&\geq P((X-x^*)^T\Sigma^{-1}(X-x^*)\leq\varepsilon^2,X\in P_1)\\
&=P((X-x^*)^T\Sigma^{-1}(X-x^*)\leq\varepsilon^2,X\in P_2)\\
&=P((Y+B\mu-Bx^*)^T\tilde{\Sigma}^{-1}(Y+B\mu-Bx^*)\leq\varepsilon^2,Y\geq \tilde{s}).
\end{aligned}
$$
Similar to the proof of Lemma \ref{thm:lemma}, we have that 
$$
\begin{aligned}
&P(X\in P_1)\\
\geq& \int_{(y+B\mu-Bx^*)^T\tilde{\Sigma}^{-1}(y+B\mu-Bx^*)\leq\varepsilon^2,y\geq\tilde{s}}\phi(y)\mathrm{d}y\\
=&\left(\prod_{i\in I}e_i^T(\tilde{\Sigma}_{II})^{-1}\tilde{s}_I\right)^{-1}\int_{\tilde{x}^T\tilde{\Sigma}^{-1}\tilde{x}\leq\varepsilon^2,\tilde{x}\geq \tilde{s}-y^*}\phi(\tilde{x}+y^*)\mathrm{d}x\\
\geq & (2\pi)^{-\frac{d}{2}}|\tilde{\Sigma}|^{-\frac12}\left(\prod_{i\in I}e_i^{T}(\tilde{\Sigma}_{II})^{-1}\tilde s_I\right)^{-1}\exp\{-(y^*)^{T}\tilde{\Sigma}^{-1}y^*/2\}(1-\varepsilon^2/2)\int_{0\leq \tilde{x}\leq \varepsilon'\mathbf{1}}\exp\{-x_I^{T}\mathbf{1}_I\}\mathrm{d}x\\
=&(2\pi)^{-\frac{d}{2}}|\tilde{\Sigma}|^{-\frac12}(1-\varepsilon^2/2)\varepsilon'^{|J|}\left(\prod_{i\in I}\frac{1-\exp\{-e_i^{T}(\tilde{\Sigma}_{II})^{-1}\tilde s_I\varepsilon'\}}{e_i^{T}(\tilde{\Sigma}_{II})^{-1}\tilde s_I}\right)\exp\{-(y^*)^{T}\tilde{\Sigma}^{-1}y^*/2\}.
\end{aligned}
$$

Combining the upper and lower bound for $P(X\in P_1)$, we finally get that $-\log P(X\in P_1)\sim (x^*-\mu)^T\Sigma^{-1}(x^*-\mu)/2$ as $\gamma\rightarrow\infty$. We apply this result to $\tilde{\mathcal{R}}_i,i=1,\dots,s$ to get that $-\log P(X\in \tilde{\mathcal{R}}_i)\sim(\tilde a_i-\mu)^T\Sigma^{-1}(\tilde a_i-\mu)/2$, which implies that 
\begin{equation}
-\log P(g(X)\geq \gamma)=-\log\left(\sum_{i=1}^s P(X\in\tilde{\mathcal{R}}_i)\right)\sim \min_{i=1,\dots,s}\{(\tilde a_i-\mu)^T\Sigma^{-1}(\tilde a_i-\mu)\}/2=(a_1-\mu)^T\Sigma^{-1}(a_1-\mu)/2.
\label{eqn:corollary}
\end{equation}
\par 
Moreover, since 
$$
L(x)\leq \frac{re^{-(x-\mu)^T\Sigma^{-1}(x-\mu)/2}}{e^{-(x-a_i)^{T}\Sigma^{-1}(x-a_i)/2}}=re^{-(a_i-\mu)^T\Sigma^{-1}(a_i-\mu)/2-(a_i-\mu)^{T}\Sigma^{-1}(x-a_i)}
$$
and $(a_i-\mu)^{T}\Sigma^{-1}(x-a_i)\geq 0$ on $\mathcal{R}_i$, we get that 
$$
E[I(X\in\mathcal{R}_i)L(X)]\leq re^{-(a_i-\mu)^T\Sigma^{-1}(a_i-\mu)/2}P(X\in \mathcal{R}_i)\leq re^{-(a_1-\mu)^T\Sigma^{-1}(a_1-\mu)/2}P(X\in\mathcal{R}_i),
$$
and hence $\tilde{E}[Z^2]\leq re^{-(a_1-\mu)^T\Sigma^{-1}(a_1-\mu)/2}P(g(X)\geq\gamma)$. Combining the inequality with the asymptotic result for $P(g(X)\geq\gamma)$, we can easily get that the IS estimator $Z$ is asymptotically optimal.
\end{proof}

\begin{proof}[Proof of Corollary \ref{thm:asymptotic}]
See \eqref{eqn:corollary} in the proof of Theorem \ref{thm:efficiency}.
\end{proof}

\begin{proof}[Proof of Corollary \ref{thm:corollary}]
Now we suppose that $X\sim \sum_{j=1}^m \pi_j N(\mu_j,\Sigma_j)$. We know that 
$$
P(g(X)\geq\gamma)=\sum_{j=1}^m \pi_j P(g(X)\geq\gamma|X\sim N(\mu_j,\Sigma_j))
$$
and thus 
$$
-\log P(g(X)\geq \gamma)\sim \min_{j=1,\dots,m}\{(a_{j1}-\mu_j)^T\Sigma_j^{-1}(a_{j1}-\mu_j)/2\}.
$$
Moreover, we have that 
$$
\frac{e^{-(x-\mu_j)^T\Sigma_j^{-1}(x-\mu_j)/2}}{\sum_{i=1}^{r_j}1/r_j e^{-(x-a_{ji})^T\Sigma_j^{-1}(x-a_{ji})/2}}\leq r_j e^{-(a_{j1}-\mu_j)^T\Sigma_j^{-1}(a_{j1}-\mu_j)/2}\leq \max_j\{r_j\}e^{-\min_{j}\{(a_{j1}-\mu_j)^T\Sigma_j^{-1}(a_{j1}-\mu_j)/2\}}
$$
and hence 
$$
L(x)\leq \max_j\{r_j\}e^{-\min_{j}\{(a_{j1}-\mu_j)^T\Sigma_j^{-1}(a_{j1}-\mu_j)/2\}}.
$$
Therefore, we get that
$$
\frac{\tilde{E}[Z^2]}{\tilde{E}[Z]^2}=\frac{E[I(g(X)\geq\gamma)L(X)]}{P(g(X)\geq\gamma)^2}\leq \frac{\max_j\{r_j\}e^{-\min_{j}\{(a_{j1}-\mu_j)^T\Sigma_j^{-1}(a_{j1}-\mu_j)/2\}}}{P(g(X)\geq\gamma)}
$$
grows polynomially in $\gamma$ and the IS estimator $Z$ is asymptotically optimal.
\end{proof}

\section*{Acknowledgements}
We gratefully acknowledge support from the National Science Foundation under grants CAREER CMMI-1653339/1834710, IIS-1849280, IIS-1849304, and the Manufacturing Futures Initiative at Carnegie Mellon University. A preliminary conference version of this work has appeared in \cite{huang2018designing}.

\bibliographystyle{plain}
\bibliography{citation.bib}

\end{document}